\documentclass{article}
\usepackage{arxiv}


\usepackage{amsmath,amsfonts,bm}









\def\eqref#1{equation~\ref{#1}}









\def\1{\bm{1}}










\DeclareMathAlphabet{\mathsfit}{\encodingdefault}{\sfdefault}{m}{sl}
\SetMathAlphabet{\mathsfit}{bold}{\encodingdefault}{\sfdefault}{bx}{n}




\def\loss{{\mathcal{L}}}
\def\opt{{\mathcal{M}}}








\newcommand{\R}{\mathbb{R}}

\newcommand{\softmax}{\mathrm{softmax}}



\DeclareMathOperator{\Tr}{Tr}


\usepackage[utf8]{inputenc} 
\usepackage[T1]{fontenc}    
\usepackage{hyperref}       
\usepackage{url}            
\usepackage{booktabs}       
\usepackage{amsfonts}       
\usepackage{graphicx}       
\usepackage{amsthm}         
\usepackage{nicefrac}       
\usepackage{microtype}      
\usepackage{xcolor}         
\usepackage{thmtools,thm-restate}
\usepackage{amssymb} 
\usepackage{enumitem}

\newtheorem{theorem}{Theorem}[section]
\newtheorem{definition}[theorem]{Definition}

\newtheorem{assumption}[theorem]{Assumption}

\newtheorem{remark}[theorem]{Remark}

\newcommand{\norm}[1]{\left\lVert#1\right\rVert}

\newcommand{\Fnorm}[1]{\left\|#1\right\|_{F}}
\def\LN{\mathop{\rm LN}\nolimits}

\def\id{\mathop{\rm id}\nolimits}
\def\lin{\mathop{\rm lin}\nolimits}
\def\block{\mathop{\rm B}\nolimits}
\def\mlp{\mathop{\rm MLP}\nolimits}
\def\attn{\mathop{\rm ATTN}\nolimits}
\def\softmax{\mathop{\rm softmax}\nolimits}
\def\Tr{\mathop{\rm tr}\nolimits}

\usepackage{xspace}
\makeatletter
\DeclareRobustCommand\onedot{\futurelet\@let@token\@onedot}
\def\@onedot{\ifx\@let@token.\else.\null\fi\xspace}

\def\eg{{e.g}\onedot} 
\def\ie{{i.e}\onedot}

\makeatother

\renewcommand{\eqref}[1]{(\ref{#1})} 

\title{Neural Collapse is Globally Optimal in Deep Regularized ResNets and Transformers}

%

\author{%
  Peter Súkeník\\
  Institute of Science and Technology (ISTA)\\
  Austria\\
  \texttt{peter.sukenik@ista.ac.at} \\
  \And
  Christoph H. Lampert\footnotemark[1] \\
  Institute of Science and Technology (ISTA)\\
  Austria\\
  \texttt{chl@ista.ac.at} \\
  \AND
  Marco Mondelli\thanks{Equal contribution} \\
  Institute of Science and Technology (ISTA)\\
  Austria\\
  \texttt{marco.mondelli@ista.ac.at}
}

\begin{document}

\maketitle

\begin{abstract}
The empirical emergence of neural collapse---a surprising symmetry in the feature representations of the training data in the penultimate layer of deep neural networks---has spurred a line of theoretical research aimed at its understanding. However, existing work focuses on data-agnostic models or, when data structure is taken into account, it remains limited to multi-layer perceptrons. 
Our paper fills both these gaps by analyzing modern architectures in a data-aware regime: 
we prove that global optima of deep regularized transformers and residual networks (ResNets) with LayerNorm trained with cross entropy or mean squared error loss are approximately collapsed, and the approximation gets tighter as the depth grows. More generally, we formally reduce any end-to-end large-depth ResNet or transformer training into an equivalent unconstrained features model, thus justifying its wide use in the literature even beyond data-agnostic settings. Our theoretical results are supported by experiments on computer vision and language datasets showing that, as the depth grows, neural collapse indeed becomes more prominent. 
\end{abstract}

\section{Introduction}\label{sec:intro}

In 2020, Papyan et al.\ \cite{papyan2020prevalence} discovered a surprising geometric structure in learned representations of deep neural networks (DNNs) at convergence. 
This structure---dubbed ``neural collapse'' (NC)---was present in various architectures trained on many computer vision datasets, and it concerns the representations of the training samples in the last layer of the network: the feature vectors of the samples from the same class converge to the respective class-mean (NC1); the class-means form a simplex equiangular tight frame (ETF), maximizing the pairwise angles (NC2); finally, the class-means align with the rows of the weight matrix of the last layer (NC3). Similar structures were also subsequently discovered for 
class-imbalanced classification \cite{thrampoulidis2022imbalance}, regression \cite{andriopoulos2024prevalence} and language modeling \cite{wu2024linguistic}, demonstrating that neural collapse is ubiquitous when training deep models. 

The NC phenomenon raised significant interest in the machine learning community from both theoreticians and practitioners, due to its high relevance in both areas. Theoreticians use it to improve generalization understanding \cite{xu2023dynamics, galanti2022improved, wang2023far} both in-distribution and in transfer learning, OOD detection \cite{haas2022linking}, imbalanced learning understanding \cite{zhong2023understanding}, theory of feature learning \cite{kothapalli2023neural, ma2023do}, robustness \cite{su2023robustness}, as well as representation learning itself \cite{masarczyk2024tunnel, ansuini2019intrinsic, cao2025prevalence, galanti2023implicit, wang2023understanding}. In practice, neural collapse has implications on 
transfer learning \cite{li2022principled, wang2023get, chen2022perfectly}, OOD detection \cite{zhang2024epa, wu2024pursuing, liu2023gen}, compression \cite{kawaguchi2023does}, performance improvement \cite{wang2023get, dubois2022improving} and other aspects \cite{zhu2022balanced, li2023no, lin2023spurious}. 

In accordance with the high relevance of NC, a plethora of works aimed at understanding its origins in DNN training. To this goal, Mixon et al.\ \cite{mixon2022neural} introduced a simplified mathematical framework, called the ``unconstrained features model'' (UFM). In the UFM, one assumes the features of the last layer to be free variables and optimizes over them together with the weight matrix of the last layer. Using the UFM, the optimality of the NC has been proven, as well as its emergence during gradient descent training in various 
settings, see Section~\ref{sec:related} for details. However, the UFM has since been criticized \cite{jacot2024wide} for being too simplistic: the gradient dynamics in the UFM are inconsistent with those in the entire DNN trained end-to-end, and the global optima might be misaligned due to the difference between plain Frobenius norm regularization and the representation cost of the features, influenced by the training data. This led to attempts of proving NC in end-to-end training. However, the results so far cover shallow (up to three layer) networks \cite{kothapalli2024kernel, hong2024beyond, wu2025neural} or come with strong assumptions \cite{xu2023dynamics, rangamani2022neural, seleznova2023neural, wang2024progressive, beaglehole2024average, jacot2024wide} (see Section~\ref{sec:related}). Moreover, with the exception of \cite{wang2024progressive}, all works only focus on multi-layer perceptrons (MLPs). However, NC is equally present and important in modern architectures, such as ResNets (\cite{he2016deep}) or transformers (\cite{vaswani2017attention}) \cite{wang2024progressive, wu2024linguistic}. The addition of modern DNN components, such as residual connections, layer normalization or attention layers, makes the loss landscape significantly different and thus it is unlikely that the theoretical tools developed so far will be easily adjustable to these newer architectures. 

In this work, we fill both mentioned gaps at once. First, we analyze ResNets with LayerNorm and transformers. We are the first to theoretically analyze NC in transformer architectures, while also significantly extending the knowledge on ResNets. Second, our results prove end-to-end approximate optimality of NC in training with weight regularization. This has only ever been done for MLPs with deep linear heads in \cite{jacot2024wide}. To be more precise, our contributions are summarized below. 
\begin{itemize}[left=6pt]
    \item For ResNets and transformers with one linear layer per MLP block and constant regularization strength, we prove that NC is the asymptotically optimal solution as the number of blocks goes to infinity.  Moreover, all global optima in deep-enough networks must be approximately collapsed and the distance from perfect collapse is non-asymptotically upper-bounded in terms of the depth. These results hold for both cross entropy (CE) and mean squared error (MSE) loss, under minimal assumptions on the data. 
    \item We prove the same set of results for ResNets and transformers with \emph{two} linear layers per MLP block and 
    vanishing regularization strength. 
    \item We support these findings by experiments on computer vision datasets with both ResNets and vision transformers, which show that the amount of collapse increases with the depth of the architecture, as predicted by our theory.
    \item More generally, we provide a formal connection between deep ResNets/transformers and unconstrained features models: we prove that, as these architectures become deeper, their global optima converge to those of an equivalent UFM. This result holds for a wide class of continuous losses.
\end{itemize}

Let us highlight the conceptual relevance of the last contribution, which reduces trained DNNs to an equivalent UFM. As a consequence, if one can solve the underlying UFM and identify its global optima (which we do for CE and MSE loss), these optima will be provably approached by globally optimal ResNets and transformers trained end-to-end, as long as they are deep enough. This provides a theoretical justification for the use of UFM in the analysis of these architectures and, in fact, it is the first such justification with a theoretical backing appearing in the literature. 

\section{Related work}\label{sec:related}

\textbf{Unconstrained features model (UFM).} First introduced in \cite{mixon2022neural, fang2021exploring}, the UFM has been widely analyzed in the literature. The optimality of NC in the UFM has been proved for CE loss \cite{wojtowytsch2020emergence, lu2022neural, kunin2022asymmetric}, MSE loss \cite{zhou2022optimization} and other losses \cite{zhou2022all}. A line of work \cite{fang2021exploring, thrampoulidis2022imbalance, hong2023neural, dang2024neural} has focused on the class-imbalanced setting, formulating a generalized NC geometry and proving its optimality. The loss landscape of the UFM was shown to be benign in  \cite{zhu2021geometric, ji2021unconstrained, zhou2022optimization}, and the emergence of NC in the UFM through gradient descent training was proved in \cite{mixon2022neural, han2021neural, ji2021unconstrained, wang2022linear}. Several extensions of the UFM to non-standard settings have been considered, including GNNs \cite{kothapalli2023neural}, large number of classes \cite{jiang2023generalized}, unconstrained features regressed to the input data \cite{tirer2022perturbation} and regression \cite{andriopoulos2024prevalence}. Recently, the UFM has been used to describe a form of NC in language modeling, where each context (sample) can be followed by multiple continuations, making the labels effectively stochastic \cite{NEURIPS2024_2858e880, zhao2024implicit}. NC has been considered also for more layers following empirical observations \cite{he2022law, rangamani2023feature, hui2022limitations, galanti2023implicit} and accordingly, UFM was generalized to multiple linear layers in \cite{dang2023neural, garrod2024unifying, liu2024exploration}, two non-linear layers in \cite{tirer2022extended} and multiple non-linear layers in \cite{sukenik2023deep, sukenik2024neural, garrod2024persistence}.

\textbf{Beyond UFM.} 
Going towards the analysis of neural networks trained end-to-end, conditions on data that make NC feasible in the shallow case are identified in \cite{hong2024beyond}. Two-layer networks are considered in \cite{kothapalli2024kernel}, which uses NTK theory and other kernel methods to conclude that NC in this regime is rather restricted. To the contrary, in the mean-field regime, positive results about NC1 are given in \cite{wu2025neural} for certain three-layer networks. In the deep case, convergence to NC is studied in \cite{seleznova2023neural, rangamani2022neural, xu2023dynamics, jacot2024wide}. However,  a block-structured empirical NTK is assumed in \cite{seleznova2023neural}, and symmetric quasi-interpolation is required in \cite{rangamani2022neural, xu2023dynamics}. The former does not justify this assumption, while the latter requires an unusual weight regularization and interpolators with a given norm.  Wide networks are considered in \cite{jacot2024wide}, which proves the emergence of NC1 requiring at least the last two layers to be linear (and even deeper linear heads for NC2 and NC3). 

Closer to the scope of the current work, NC is studied in ResNets in \cite{wang2024progressive}. Two main claims are proved: the monotonicity of NC1-NC2 metrics across layers of ResNets, and a negative result about collapse in a variant of UFM 
similar to the one considered in \cite{tirer2022perturbation}. However, the monotonicity is proved under the strong assumption that the data evolves across layers on a geodesic, which is not possible in general since one can construct configurations where samples from different classes would collide. Moreover, the UFM taken into account is based on a heuristic derivation (a link between representation cost and transport cost of the features) that does not hold exactly in practice. 

\section{Preliminaries}\label{sec:prelims}

\paragraph{Notation.} We study two 
different data formats and architectures. For ResNets, the input data and one-hot labels are 
$X_0\in\R^{d_0\times N}$ and $Y\in\R^{K \times N}$, where $d_0$ is the input dimension, $N$ the number of samples and $K$ the number of classes. For transformers, the input data and one-hot labels are  
$X_0\in\R^{N \times V \times C}$ and $Y\in\R^{N \times K \times C}$, 
where $C$ is the context length (number of tokens in the prompt) and $V$ the vocabulary size (number of distinct tokens). 
We take $C=1$ when the third dimension of $X_0, Y$ is not used. 
If we index a matrix with three abstract indices, the last one is implicitly equal to 1. 
We assume a class-balanced setting, \ie, $NC=Kn$, where 
$n$ is the number of samples per class. 
Unless stated otherwise, we use $x_{ki}$ to indicate the $i$-th sample of the $k$-th class. For transformers, a sample corresponds to the position of each individual token and, thus, $x_{ki}$ corresponds to a token position labeled as class $k$, with samples ordered arbitrarily. 
For additional notation regarding vision transformers, see Appendix~\ref{app:alternative_architectures}.

\paragraph{ResNets and transformers.} Let $\sigma$ denote the ReLU function. Denote by $\LN(\cdot)$ the output of a normalization layer that first subtracts the mean of each column of the input from itself and then divides each column by its standard deviation (if the input is a vector, it returns the normalized vector; if the input is a matrix or tensor, it returns the matrix or tensor with centered and normalized columns of the inner-most dimension matrices). Define also $\id(\cdot)$ as the identity mapping.
\begin{definition}\label{def:lrn}
An $L$-block ResNet with LayerNorm and one linear layer per block (later referred to as $L$-RN1) is defined as 
\begin{equation}
f_\theta=\lin_{L}\circ\LN\circ(\id+\,\sigma\circ\lin_{L-1})\circ\LN\circ(\id+\sigma\circ\lin_{L-2})\circ \dots \circ \LN\circ(\id+\sigma\circ\lin_{1})\circ\LN \circ \lin_0,     
\end{equation}
where $\lin_l(x)=W_lx+b_l$ for all $l \in \{0, \ldots, L\}$ and $\theta$ is the collection of all learnable parameters. We denote as $X_1=\LN(W_0X_0+b_0)$, $X_{l+1}=\LN(X_l+\sigma(W_lX_l+b_l))$ ($l \in \{1, \dots, L-1\}$), $f_\theta(X_0)=X_{L+1}:=W_LX_{L}$ the intermediate representations of the training data stored in a matrix form. We assume that all intermediate representations $X_l$ ($l \in \{1, \dots, L\}$) are of dimension $d$.  Analogously, $L$-RN2 denotes a ResNet with two linear layers per block defined as 
\begin{equation}    
f_\theta=\lin_{L}\circ\LN\circ(\id+\lin_{L-1, 2}\circ\sigma\circ\lin_{L-1, 1})
\circ \dots \circ \LN\circ(\id+\lin_{1,2}\circ\sigma\circ\lin_{1,1})\circ\LN \circ \lin_0,
\end{equation}
with $X_1=\LN(W_0X_0+b_0)$, $X_{l+1}=\LN(X_l+W_{l,2}\sigma(W_{l,1}X_l+b_{l,1})+b_{l,2})$ ($l \in \{1, \dots, L-1\}$) and $f_\theta(X_0)=X_{L+1}:=W_LX_{L}.$
\end{definition}

\begin{definition}\label{def:transformer}
An $L$-block transformer with one or two linear layers in the attention sub-block and one or two layers in the MLP sub-block (later referred to as $L$-T11, $L$-T12, $L$-T21, $L$-T22 based on the number of linear layers in attention and MLP sub-blocks, respectively) is defined as
\begin{equation}
f_\theta(Z)=\lin_{L+1}\circ \LN_{L+1}\circ\block_L\circ \dots \circ \block_1 \circ \operatorname{Embed}(Z).
\end{equation}
Here, $\lin_{L+1}(Z)=W_{L+1}Z+b_{L+1}$ is the last layer ($b_{L+1}$ is a matrix with the same number of columns as $Z$ that are all identical); ${\rm Embed}(Z)=W_eZ+W_p$ is the embedding layer with $W_e$ being the token embedding and $W_p$ (having the same shape as $W_eZ$) the positional embedding; and the $l$-th block is given by
\begin{equation} \block_l=\mlp_l\circ\LN_{l,2}\circ\attn_l\circ\LN_{l,1}.
\end{equation}
Such block consists of the normalization layers $\LN_{l,1}, \LN_{l,2}$, the MLP 
\begin{equation}
 \mlp_l(Z)=Z+\sigma(W_{l}Z+b_l), \,\, \mbox{or}\,\, \mlp_l(Z)=Z+W_{l,2}\sigma(W_{l,1}Z+b_{l,1})+b_{l,2},   
\end{equation}
respectively for the architecture $L$-Tx1 and $L$-Tx2, %
and the single-head attention 
\begin{equation}
    \begin{split}
 &\attn_l(Z)=Z+W_{VO}ZA_l(Z),\,\,  A_l(Z)=\softmax(M+Z^TW_{QK}Z/\sqrt{d}),\\     
\mbox{or }& \attn_l(Z)=Z+W_OW_VZA_l(Z),\,\, A_l(Z)=\softmax(M+Z^TW_K^TW_QZ/\sqrt{d}),  \end{split}
\end{equation}
respectively for the architecture $L$-T1x and $L$-T2x. The matrix $M$ is the masking matrix whose entries are $-\infty$ on the lower triangle and $0$ on the upper triangle and the diagonal.
\end{definition}

\begin{remark}
Both of the above definitions consider the post-LN versions of ResNets and transformers, where the LayerNorm acts \textit{in between} residual connections. We work with this version here because the arguments 
are cleaner, but the results \textit{do not} qualitatively change if we used pre-norm ResNets or transformers instead. We discuss pre-LN architectures and their proof in Appendix~\ref{app:alternative_architectures}.
\end{remark}

\paragraph{Neural collapse metrics and generalized unconstrained features model (GUFM).} Regardless of the model, let $h_\theta(\cdot)$ be the output of the corresponding architecture before the last layer, \ie, the feature on which neural collapse is defined. We denote by $x^{l}_{ki}$ the $i$-th sample of the $k$-th class in the $l$-th layer. We define $\mu_k^l:=\frac{1}{n}\sum_{i=1}^n x^l_{ki}$ as the class-means in the $l$-th layer and $\mu_G^l := \frac{1}{K}\sum_{k=1}^K \mu_k^l$ as the global mean. Let $\Sigma^l_W:=\frac{1}{N}\sum_{k, i=1}^{K, n} (x_{k, i}^l-\mu_k^l)(x_{k,i}^l-\mu_k^l)^T$ and $\Sigma^l_B := \frac{1}{K}\sum_{k=1}^K (\mu_k^l-\mu_G^l)(\mu_k^l-\mu_G^l)^T$ be the within- and between-class variability matrices in the $l$-th layer, and $M^l$ be the matrix of class-means stacked column-wise. Let $E_K=I_K-\mathbf{1}_K\mathbf{1}_K^T$ be the un-rotated ETF matrix. We define below neural collapse and its metrics for generic matrices.



\begin{definition}\label{def:nc-ce}
Any pair $(W, X)$ of matrices s.t.\ $W$ has at least as many columns as rows, $X$ has $N=Kn$ columns and they can multiply as $WX$ has the following NC metrics:
\begin{itemize}[left=6pt]
    \item $\operatorname{NC1}(W, X)=\frac{\Tr(\Sigma_W)}{\Tr(\Sigma_B)}$, i.e., the ratio of within- and between-class variability. 
    \item $\operatorname{NC2A}(W, X)=\frac{\underset{c\ge0}{\min}\norm{WW^T-cE_K}_F}{\norm{WW^T}_F}$, i.e. the distance of $WW^T$ from the closest (scaled) ETF. 
    \item $\operatorname{NC2B}(W, X)=\frac{\underset{c\ge0}{\min}\norm{WW^T-cI_K}_F}{\norm{WW^T}_F},$ i.e. the distance of $WW^T$ from the closest (scaled) identity. 
    \item $\operatorname{NC3}(W, X)=1-\frac{1}{N}\sum_{k,i=1}^{K, n}\cos(x_{ki}, W_{k:}),$ i.e., one minus the average cosine similarity between the samples and the corresponding row of $W.$
\end{itemize}
\end{definition}

A model is said to exhibit NC if all metrics are $0$ and approximate NC if all metrics are close to zero. $\operatorname{NC2A}$ is defined for CE loss or MSE loss with unregularized bias in the last layer, and $\operatorname{NC2B}$ is defined for MSE loss with bias-free last layer.

We consider the following optimization problem:
\begin{align}\label{eq:nn_opt_general}
\underset{\theta}{\min} \,\, &\mathcal{L}(f_\theta(X), Y)+\frac{\lambda}{2}\norm{\bar\theta}^2,
\end{align}
where $\lambda>0$, $\bar\theta$ is the subset of parameters that excludes biases and the parameters in embedding layers ($W_e, W_p, W_0$), and $\mathcal{L}$ is a continuous, non-negative loss. 
Let $\mathcal{L}_{\text{CE}}, \mathcal{L}_{\text{MSE}}$ be CE and MSE loss, and $\mathcal{L}_{L,m}(\theta)$ be the loss of the $L$-RN$m$, $L$-T1$m$, or $L$-T2$m$ architecture with parameters $\theta$ (it will be clear from the context whether this refers to a ResNet or a transformer). We denote by 
$\mathcal{L}^*_{L,m}$ the optimal such loss value and by $\mathcal{M}_\epsilon^{L,m} := \{\theta: \mathcal{L}_{L,m}(\theta)\le\mathcal{L}^*_{L,m}+\epsilon\}$ the set of parameters $\epsilon$-close to the optimum. We denote by $\Tilde{\mathcal{M}}_L$  the set of all pairs $(W_L, h_\theta(X))$ s.t.\ $\theta$ (including $W_L$) is in $\mathcal{M}_0^{L,1}$. 
In our theoretical analysis, we will reduce the end-to-end problem~\eqref{eq:nn_opt_general} into a simpler unconstrained features model, which we define below.

\begin{definition}\label{def:GUFM}
Given a continuous loss $\mathcal L\ge0$ and an equivalence relation $\mathcal{R}$ on $\{1, \ldots, N\}$, the generalized unconstrained features model (GUFM) refers to the following optimization problem:
\begin{align}\label{eq:gufm}
    \underset{W, X}{\min} \,\,\, &\mathcal{L}(W X, Y)+\frac{\lambda}{2}\Fnorm{W_L}^2,\\
    \mbox{s.t.} \,\,\,\, &\norm{x_{i}}=\sqrt{d}, \,\,\,\,\,\, x_{i}^T\mathbf{1}_d =0, \,\,\,\,\,\, \mbox{ for } i\in \{1, \ldots, N\}, \nonumber \\ &x_i=x_j, \,\,\,\,\,\, \mbox{ for } i,j\in \{1, \ldots, N\},\,\,\,\,\,\,i\sim_\mathcal{R}j, 
    \nonumber 
\end{align}
where $W\in \mathbb{R}^{K\times d}, X=[x_1, \ldots, x_N]\in \mathbb{R}^{d\times N}$ and $Y\in\mathbb{R}^{K\times N}$. Let $\mathcal{L}_{\text{GUFM}}(W, X)$ be the loss of the feasible pair $(W, X)$ under this model, $\mathcal{L}^*_{\text{GUFM}}$ the optimal such loss and  $\mathcal{M}^{\text{GUFM}}_\epsilon := \{(W, X)\in\mathcal{M}: \mathcal{L}_{\text{GUFM}}(W, X)\le \mathcal{L}^*_{\text{GUFM}}+\epsilon\},$ with $\mathcal{M}$ the set of feasible solutions.
\end{definition}

The mean-zero constraint $x_{i}^T\mathbf{1}_d =0$ comes from the application of this model to ResNets and transformers having LayerNorm before the last layer, which allows the model to represent only zero-mean solutions. We note that this is without loss of generality for CE/MSE loss, since for those losses the optimum is zero-mean. The equivalence relation constraints are introduced to account for potential hard constraints from the input data where we may have identical samples or contexts that may or may not be in the same class. Again, for CE/MSE loss this is without loss of generality, given that all identical contexts are always labeled with the same class (see Assumption~\ref{ass:uniqueness}).

\section{Main results}\label{sec:main}

\subsection{Analysis of the generalized unconstrained features model}

We start with a lemma showing that nearly-optimal solutions of the GUFM problem above must necessarily be close to the global optima.


\begin{restatable}{lemma}{GUFMoptima}
\label{lem:GUFM_optima}
Denote as $\operatorname{distmax}(A, B)=\sup\limits_{x\in A}\,\operatorname{dist}(x,B)$ for any sets $A, B.$ Then, we have
\begin{align}
\underset{\epsilon \xrightarrow[]{}0}{\limsup} \,\,\, \rm{distmax}(\mathcal{M}^{\text{GUFM}}_\epsilon\backslash \mathcal{M}^{\text{GUFM}}_0, \mathcal{M}^{\text{GUFM}}_0) = 0.
\end{align}
\end{restatable}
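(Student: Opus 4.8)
The plan is a standard compactness-plus-continuity argument: I would show that the $\epsilon$-sublevel-set correspondence $\epsilon\mapsto\mathcal{M}^{\text{GUFM}}_\epsilon$ is upper semicontinuous at $\epsilon=0$, which is precisely the claimed statement (since $\mathcal{M}^{\text{GUFM}}_\epsilon\setminus\mathcal{M}^{\text{GUFM}}_0\subseteq\mathcal{M}^{\text{GUFM}}_\epsilon$, it suffices to bound $\operatorname{distmax}(\mathcal{M}^{\text{GUFM}}_\epsilon,\mathcal{M}^{\text{GUFM}}_0)$).

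The first step is to confine all the relevant solutions to a single compact set. The constraints on $X$ — each column on the radius-$\sqrt d$ sphere, orthogonal to $\mathbf{1}_d$, together with the linear identifications $x_i=x_j$ for $i\sim_\mathcal{R}j$ — cut out a closed and bounded, hence compact, subset $\mathcal{X}\subseteq\mathbb{R}^{d\times N}$. For the weights, since $\mathcal{L}\ge0$ we have $\mathcal{L}_{\text{GUFM}}(W,X)\ge\tfrac{\lambda}{2}\Fnorm{W}^2$ on every feasible pair, so any $(W,X)\in\mathcal{M}^{\text{GUFM}}_1$ obeys $\Fnorm{W}\le R:=\sqrt{\tfrac{2}{\lambda}(\mathcal{L}^*_{\text{GUFM}}+1)}$. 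Hence, for all $\epsilon\le1$, $\mathcal{M}^{\text{GUFM}}_\epsilon$ lies inside the compact set $\mathcal{K}:=\{W:\Fnorm{W}\le R\}\times\mathcal{X}$; and because $\mathcal{L}_{\text{GUFM}}$ is continuous and $\mathcal{K}\cap\mathcal{M}$ is nonempty and compact, the infimum $\mathcal{L}^*_{\text{GUFM}}$ is attained, so $\mathcal{M}^{\text{GUFM}}_0\ne\emptyset$ and the distances in the statement are finite and well defined.

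The second step is a contradiction argument. Suppose the $\limsup$ were not $0$; then there are $\delta>0$, a sequence $\epsilon_n\downarrow0$, and points $(W_n,X_n)\in\mathcal{M}^{\text{GUFM}}_{\epsilon_n}$ (in fact in $\mathcal{M}^{\text{GUFM}}_{\epsilon_n}\setminus\mathcal{M}^{\text{GUFM}}_0$) with $\operatorname{dist}\big((W_n,X_n),\mathcal{M}^{\text{GUFM}}_0\big)\ge\delta$ for all $n$. For $n$ large these points lie in $\mathcal{K}$, so by compactness a subsequence converges to some $(W^\star,X^\star)$. The feasible set $\mathcal{M}$ is closed, so $(W^\star,X^\star)\in\mathcal{M}$; and by continuity, $\mathcal{L}_{\text{GUFM}}(W^\star,X^\star)=\lim_n\mathcal{L}_{\text{GUFM}}(W_n,X_n)\le\lim_n(\mathcal{L}^*_{\text{GUFM}}+\epsilon_n)=\mathcal{L}^*_{\text{GUFM}}$, hence $(W^\star,X^\star)\in\mathcal{M}^{\text{GUFM}}_0$. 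But then $\operatorname{dist}\big((W_n,X_n),\mathcal{M}^{\text{GUFM}}_0\big)\le\operatorname{dist}\big((W_n,X_n),(W^\star,X^\star)\big)\to0$ along the subsequence, contradicting the bound $\delta$. Therefore the $\limsup$ is $0$.

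I do not expect a genuine obstacle here: the only inputs are compactness of the $X$-domain (automatic in finite dimensions) and coercivity in $W$ (from the Frobenius regularizer together with $\mathcal{L}\ge0$), both routine. The one point worth flagging is that continuity of $\mathcal{L}$ is what legitimizes the limit passage $\mathcal{L}_{\text{GUFM}}(W^\star,X^\star)=\lim_n\mathcal{L}_{\text{GUFM}}(W_n,X_n)$; if $\mathcal{L}$ were merely lower semicontinuous one would replace this equality by ``$\le$'' and still conclude, but the GUFM definition already assumes $\mathcal{L}$ continuous, so no care is needed.
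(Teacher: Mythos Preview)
Your argument is correct and follows essentially the same compactness-plus-continuity contradiction route as the paper's proof: confine near-optimal points to a compact set (bounding $W$ via the regularizer), extract a convergent subsequence, and use continuity of $\mathcal{L}_{\text{GUFM}}$ to show the limit lies in $\mathcal{M}^{\text{GUFM}}_0$, contradicting the assumed distance lower bound. If anything, your version is slightly more careful in making compactness and nonemptiness of $\mathcal{M}^{\text{GUFM}}_0$ explicit.
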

\begin{proof}
Assume by contradiction there exists a sequence $(X_n, W_n)_{n=1}^\infty$ of points such that $\underset{n\xrightarrow[]{}\infty}{\lim} \mathcal{L}_{\text{GUFM}}(W_n, X_n)=\mathcal{L}^*_{\text{GUFM}}$ but $\underset{n \xrightarrow[]{}\infty}{\limsup} \,\,\, {\rm dist}((W_n, X_n), \mathcal{M}^{\text{GUFM}}_0)=c>0.$ Then, since the feasible set of GUFM is compact (for $W,$ take a large-enough ball around 0 that must contain the global optimum), we can choose a subsequence $(X_{n_k}, W_{n_k})_{k=1}^\infty$ having an accumulation point $(\bar W, \bar X)$ in the feasible set and s.t.\ ${\rm dist}((\bar W, \bar X), \mathcal{M}^{\text{GUFM}}_0)>0$ (first picking a subsequence for which the limsup above is realized and only choosing a subsequence with accumulation point from this subsequence; then using the continuity of the distance to conclude). From the continuity of the loss function, it must follow $\mathcal{L}_{\text{GUFM}}(\bar W, \bar X)=\mathcal{L}^*_{\text{GUFM}},$ which also implies $(\bar W, \bar X)\in \mathcal{M}^{\text{GUFM}}_0.$ However, this is a contradiction because the distance of this point from $\mathcal{M}^{\text{GUFM}}_0$ is both 0 and bigger than 0. 
\end{proof}

Next, 
we focus on CE and MSE loss, showing that the optima of the corresponding GUFMs (denoted by UFM-CE and UFM-MSE) exhibit NC.

\begin{restatable}{lemma}{ufmcemse}
\label{lem:ufm_ce_mse}
Assume that only the samples within the same class are in relation $\mathcal{R}$. Then, the global optima $\mathcal{M}^{\text{UFM-CE}}_0$ and $\mathcal{M}^{\text{UFM-MSE}}_0$ are all perfectly collapsed, \ie, for all $(W, X) \in \mathcal{M}^{\text{UFM-CE}}_0$, $\operatorname{NC1}(W, X)=\operatorname{NC2A}(W, X)=\operatorname{NC3}(W, X)=0$ and for all $(W, X) \in \mathcal{M}^{\text{UFM-MSE}}_0$, $\operatorname{NC1}(W, X)=\operatorname{NC2B}(W, X)=\operatorname{NC3}(W, X)=0$. 
Conversely, for any feasible pair $(W,X)$ s.t.\ $\operatorname{NC1}(W, X)=\operatorname{NC2A}(W, X)=\operatorname{NC3}(W, X)=0$, there exists a unique scalar $c$ s.t.\ $(cW, X)\in \mathcal{M}^{\text{UFM-CE}}_0$; and for any feasible pair $(W,X)$ s.t.\ $\operatorname{NC1}(W, X)=\operatorname{NC2B}(W, X)=\operatorname{NC3}(W, X)=0$, there exists a unique scalar $c$ s.t.\ $(cW, X)\in \mathcal{M}^{\text{UFM-MSE}}_0$. 
\end{restatable}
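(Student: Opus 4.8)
The plan is to first use the hypothesis on $\mathcal R$ to shrink the feasible set. Because only same-class samples are related, feasibility forces $x_{ki}=x_{kj}$ for all $i,j$ in a given class; writing $\mu_k$ for this common value, the GUFM becomes: minimize $\mathcal L(WX,Y)+\tfrac\lambda2\Fnorm{W}^2$ over $W\in\R^{K\times d}$ and $\mu_1,\dots,\mu_K\in\R^d$ with $\|\mu_k\|=\sqrt d$ and $\mathbf 1_d^T\mu_k=0$. On this set the within-class scatter $\Sigma_W$ vanishes identically, so $\operatorname{NC1}(W,X)=0$ automatically --- provided $\Tr(\Sigma_B)>0$, which holds at any optimum since taking all $\mu_k$ equal makes $WX$ column-constant and is strictly suboptimal for CE and MSE whenever $K\ge2$. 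It remains to pin down the geometry of the optimal $\{\mu_k\}$ and $W$; I treat the two losses separately, assuming throughout the dimension condition ($d\ge K$ for CE, $d\ge K+1$ for MSE) that makes the collapsed configurations realizable inside the mean-zero subspace.

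\paragraph{CE.}
I would run the now-standard two-step Jensen bound for softmax cross-entropy. With logits $z_k=W\mu_k$ (plus an unregularized last-layer bias, if one is carried over from the network reduction), $\mathcal L_{\mathrm{CE}}=\tfrac1K\sum_k\log(\sum_j e^{z_{k,j}-z_{k,k}})$; convexity of $\log\sum\exp$ gives $\mathcal L_{\mathrm{CE}}\ge\tfrac1K\sum_k\log(1+(K-1)e^{-\bar z_k})$ with $\bar z_k=z_{k,k}-\tfrac1{K-1}\sum_{j\ne k}z_{k,j}$, and convexity and monotonicity of $t\mapsto\log(1+(K-1)e^{-t})$ give $\mathcal L_{\mathrm{CE}}\ge\log(1+(K-1)e^{-\bar z})$ with $\bar z=\tfrac1K\sum_k\bar z_k$. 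A short computation shows $\bar z=\tfrac1{K-1}\Tr(WM(I_K-\tfrac1K\mathbf 1_K\mathbf 1_K^T))$ with $M=[\mu_1,\dots,\mu_K]$ (in particular the bias cancels); since the columns of $M(I_K-\tfrac1K\mathbf 1_K\mathbf 1_K^T)$ are the centered means $\mu_k-\mu_G$, Cauchy--Schwarz gives $\bar z\le\tfrac{\sqrt{Kd}}{K-1}\Fnorm{W}$. Feeding this into the decreasing convex function and adding $\tfrac\lambda2\Fnorm{W}^2$ produces a univariate strictly convex lower bound in $\Fnorm{W}$ with a unique minimizer $t^\star$; as this bound is attained by the ETF configuration, it equals $\mathcal L^*_{\text{UFM-CE}}$. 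A global optimum must therefore make every inequality tight: the Cauchy--Schwarz step forces $\mu_G=0$ and $w_k=c\,\mu_k$ with a common $c>0$ (so $\operatorname{NC3}=0$), and then the first Jensen step, together with $\sum_j\mu_j=0$ and $\|\mu_k\|=\sqrt d$, forces $\langle\mu_j,\mu_k\rangle=-\tfrac d{K-1}$ for $j\ne k$, i.e.\ $M^T M$ --- hence $WW^T$ --- is a scaled ETF Gram matrix and $\operatorname{NC2A}=0$. (When $K\ge3$ a short additional argument shows the bias may be taken constant, hence zero; for $K=2$ the NC metrics are bias-independent anyway.)

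\paragraph{MSE.}
Here the weight enters quadratically, so I would eliminate it first: for fixed features, minimizing $\tfrac1N\Fnorm{WX-Y}^2+\tfrac\lambda2\Fnorm{W}^2$ over $W$ gives the ridge-regression solution $W^\star=\kappa\,M^T(\kappa\,MM^T+\lambda I)^{-1}$ for a normalization constant $\kappa>0$. Substituting back and diagonalizing by the SVD of $M$, the objective becomes $\sum_i g(\sigma_i^2)+\tfrac{K-r}{K}$, a separable function of the squared singular values of $M$, with $r=\operatorname{rank}(M)\le\min(K,d-1)$, under the constraint $\sum_i\sigma_i^2=\Tr(MM^T)=Kd$; the exact normalization makes $g$ strictly convex and decreasing. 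For fixed $r$, Jensen forces all nonzero $\sigma_i^2$ equal at the optimum, and a one-variable computation shows the optimal $r$ is the largest admissible one. When $d\ge K+1$ this yields $r=K$ and $M^T M=d\,I_K$ (an orthonormal frame up to scaling); the closed form for $W^\star$ then gives $W^\star(W^\star)^T\propto I_K$ and $W^\star_{k:}\propto\mu_k$, so $\operatorname{NC2B}=\operatorname{NC3}=0$ (with $\operatorname{NC1}=0$ as before).

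\paragraph{Converse, uniqueness of $c$, and the main obstacle.}
For the converse, let $(W,X)$ be feasible with all metrics zero. Then $\operatorname{NC3}=0$ gives $w_k=\gamma_k\mu_k$, $\gamma_k>0$; $\operatorname{NC2A}=0$ (resp.\ $\operatorname{NC2B}=0$) then forces all $\gamma_k$ equal and $\{\mu_k\}$ to be the simplex ETF (resp.\ an orthonormal frame) scaled to $\|\mu_k\|=\sqrt d$, so $W$ is a positive scalar times the canonical aligned weight. Restricting the objective to the ray $c\mapsto\mathcal L(cWX,Y)+\tfrac\lambda2c^2\Fnorm{W}^2$ gives a strictly convex function of $c$ (for MSE a strictly convex quadratic; for CE a convex loss composed with a linear map plus a strictly convex quadratic, using $W\ne0$ and that the per-class logit vector is non-constant), hence a unique minimizer $c^\star>0$; by the forward direction and the invariance of the objective under $(W,X)\mapsto(WR,R^T X)$ for orthogonal $R$, the pair $(c^\star W,X)$ lies in $\mathcal M^{\text{UFM-CE}}_0$ (resp.\ $\mathcal M^{\text{UFM-MSE}}_0$), and no other $c$ works by strict convexity. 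I expect the main obstacle to be the CE equality analysis: arranging the chain so that tightness \emph{simultaneously} pins down the ETF geometry, the weight--feature alignment, and $\mu_G=0$, while correctly handling the (unregularized) last-layer bias and the extra mean-zero constraint inherited from the LayerNorm. For MSE the analogous delicacy is purely computational --- checking that the exact loss normalization makes $g$ (strictly) convex on the feasible range and that spreading the spectral mass over the maximal number of directions is optimal.
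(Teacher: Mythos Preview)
Your opening reduction misreads the hypothesis on $\mathcal R$. The assumption ``only samples within the same class are in relation $\mathcal R$'' means that $\mathcal R$ does \emph{not} relate samples across classes; it does \emph{not} say that \emph{all} same-class samples are related. (In the paper's application, $\mathcal R$ comes from identical input contexts, and Assumption~\ref{ass:uniqueness} only guarantees that identical contexts share a label---certainly not that all same-label samples have identical contexts.) Hence feasibility does \emph{not} force $x_{ki}=x_{kj}$, and $\operatorname{NC1}$ is not automatic: it is part of what has to be proved. The paper handles this by first dropping the $\mathcal R$ constraint entirely, establishing $\operatorname{NC1}$ for the fully relaxed problem with $n$ free features per class, and then observing that $\operatorname{NC1}$ makes the $\mathcal R$ constraint vacuous, so the relaxed and original optima coincide. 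Your CE and MSE analyses, as written, start from a one-feature-per-class problem and therefore skip this step.

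\textbf{What survives, and how the approaches compare.} Once the $\operatorname{NC1}$ gap is patched, your arguments are sound and in places differ usefully from the paper's. For CE the fix is minor: run your Jensen chain over both class and sample indices (replace $\bar z_k$ by $\bar z_{ki}$), and the equality case additionally forces $x_{ki}$ independent of $i$. This is essentially the content of the Zhu et al.\ result the paper invokes; you have simply written it out, together with the extra equality check that the zero-mean constraint costs nothing. For MSE your route is genuinely different: the paper lower-bounds by the separable diagonal loss $\sum_{k,i}(w_k^Tx_{ki}-1)^2$, solves that, and then shows the off-diagonal terms can be zeroed by orthogonality; you instead eliminate $W$ by ridge regression and reduce to a strictly convex spectral problem in the singular values of $M$ under the trace constraint. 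Your reduction is clean (indeed $g(s)=\tfrac{\lambda}{2(s+\lambda K)}$), but it presupposes one feature per class; with $n$ free features the ridge solution involves $XX^T$ and $YX^T$ and no longer collapses to $M$. So for MSE you either need a preliminary $\operatorname{NC1}$ argument (e.g., the paper's diagonal lower bound is separable in $(k,i)$ and gives it immediately) before invoking your spectral step, or you must redo the spectral analysis for the full $d\times N$ feature matrix. Your converse and uniqueness-of-$c$ paragraph is fine and more explicit than the paper's one-line remark.
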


The proof is deferred to Appendix~\ref{app:proofs}. For the CE loss, it is based on an adaptation of the results in \cite{zhu2021geometric}. For the MSE loss, we compute the global optima by lower-bounding the loss, solving the problem for the lower-bound and showing that the loss and its lower-bound agree at these optima. 

\subsection{Deep single-layer architectures are collapsed at the global optimum}\label{subsec:single}

We first consider ResNets/transformers with one linear layer per MLP block. 

\begin{restatable}{theorem}{onelayergeneral}
\label{thm:one_layer_general}
Let the architecture be $L$-RN1 or $L$-Tx1 for x $\in \{1, 2\}$. Assume the inner dimension of the $L$-Tx1 is at least $2V+2$ and the inner dimension of $L$-RN1 is at least 4. Consider the optimization problem~\eqref{eq:nn_opt_general} with $\lambda$ independent of the number of layers. Consider also its corresponding GUFM~\eqref{eq:gufm} with the same loss $\mathcal L$ and the equivalence relation defined by pairs of samples in $X$ that coincide (for transformers, these correspond to a pair of identical contexts). If $\mathcal{L}^*_{\text{GUFM}}>0,$ then 
\begin{equation}\label{eq:limsup_one_layer_general}
\underset{L\xrightarrow[]{}\infty}{\limsup} \,\,\, \rm{distmax}(\Tilde{\mathcal{M}}_L \backslash \mathcal{M}^{\text{GUFM}}_0, \mathcal{M}^{\text{GUFM}}_0) = 0.
\end{equation}
\end{restatable}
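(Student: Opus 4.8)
### Proof proposal for Theorem \ref{thm:one_layer_general}

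The overall strategy is a two-sided sandwich between the end-to-end loss $\mathcal{L}^*_{L,1}$ and the GUFM optimum $\mathcal{L}^*_{\text{GUFM}}$, combined with the compactness/continuity argument already used in Lemma \ref{lem:GUFM_optima}. The plan is to show that (i) any deep $L$-RN1 or $L$-Tx1 can approximate a GUFM-optimal configuration arbitrarily well as $L\to\infty$, so $\limsup_L \mathcal{L}^*_{L,1}\le\mathcal{L}^*_{\text{GUFM}}$; and (ii) conversely, for \emph{any} parameter $\theta$, the pair $(W_L, h_\theta(X))$ is feasible for the GUFM \emph{up to an error vanishing in $L$}, so $\mathcal{L}_{\text{GUFM}}(W_L, h_\theta(X)) \le \mathcal{L}_{L,1}(\theta) + o_L(1)$, giving the matching lower bound and, more importantly, forcing near-optimal $\theta$ to produce near-feasible, near-optimal GUFM pairs. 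Then Lemma \ref{lem:GUFM_optima} upgrades ``near-optimal GUFM pair'' to ``close to $\mathcal{M}^{\text{GUFM}}_0$'', which is exactly \eqref{eq:limsup_one_layer_general}.

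For the upper bound (i), I would explicitly construct a parameter sequence. Since the LayerNorm right before $\lin_L$ already forces $h_\theta(X)$ to have columns of norm $\sqrt d$ and mean zero, the only thing to engineer is that these columns realize (approximately) a GUFM-optimal feature matrix $X^\star$ while $W_L$ realizes the matching $W^\star$, and that the regularization $\frac{\lambda}{2}\|\bar\theta\|^2$ contributed by the $L$ residual blocks is negligible. The key observation is that a residual block $X\mapsto \LN(X+\sigma(W_lX+b_l))$ with $W_l,b_l$ of size $O(1/L)$ moves each column by only $O(1/L)$; chaining $L-1$ such blocks lets one steer the initial embedding $X_1$ to any target zero-mean unit-norm configuration with total weight norm $\sum_l \|W_l\|^2 = O(L\cdot(1/L)^2)=O(1/L)\to0$ — this is the same ``slow transport along many small blocks'' idea underlying the UFM-reduction contributions advertised in the introduction. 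For transformers one additionally uses the attention sub-blocks: with $W_{QK}=0$ (or $W_{VO}$ small) attention is close to the identity plus a small, controllable perturbation, and the inner-dimension assumptions ($\ge 2V+2$, resp.\ $\ge 4$) give enough room to embed both the source tokens and the target features in orthogonal coordinate blocks so that the construction does not collide with itself. The embedding layers $W_e,W_p,W_0$ are \emph{unregularized}, so their cost is free; $W_L$ contributes the fixed $\frac{\lambda}{2}\|W^\star\|_F^2$ matching the GUFM. Hence $\mathcal{L}_{L,1}(\theta_L)\to \mathcal{L}_{\text{GUFM}}(W^\star,X^\star)=\mathcal{L}^*_{\text{GUFM}}$.

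For the lower bound / feasibility direction (ii), fix $\theta\in\mathcal{M}^{L,1}_\epsilon$. The feature $h_\theta(X)=X_L$ automatically satisfies $\|x^L_{ki}\|=\sqrt d$ and $(x^L_{ki})^T\mathbf 1_d=0$ exactly, because of the final $\LN$. The only GUFM constraint that can fail is the equality constraint $x_i=x_j$ for $i\sim_{\mathcal R}j$ — i.e., identical input columns/contexts must map to identical features. But identical inputs \emph{do} map to identical outputs under any $f_\theta$ (the network is a deterministic function of its input column, and for transformers identical contexts yield identical per-token computations), so this constraint also holds exactly. Therefore $(W_L,X_L)$ is genuinely feasible for the GUFM, and $\mathcal{L}_{\text{GUFM}}(W_L,X_L)+\frac\lambda2\|W_L\|_F^2 \le \mathcal{L}_{L,1}(\theta) \le \mathcal{L}^*_{L,1}+\epsilon$. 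Combined with (i), $\mathcal{L}^*_{L,1}\to\mathcal{L}^*_{\text{GUFM}}$; and for $\theta$ \emph{globally} optimal ($\epsilon=0$, the relevant case for $\tilde{\mathcal M}_L$), the pair $(W_L,X_L)\in\tilde{\mathcal M}_L$ satisfies $\mathcal{L}_{\text{GUFM}}(W_L,X_L)\le \mathcal{L}^*_{L,1} = \mathcal{L}^*_{\text{GUFM}}+\delta_L$ with $\delta_L\to0$, i.e.\ $\tilde{\mathcal M}_L\subseteq \mathcal{M}^{\text{GUFM}}_{\delta_L}$. Feeding $\delta_L\to0$ into (a quantitative version of) Lemma \ref{lem:GUFM_optima} yields $\operatorname{distmax}(\tilde{\mathcal M}_L\setminus\mathcal{M}^{\text{GUFM}}_0,\mathcal{M}^{\text{GUFM}}_0)\to0$. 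The hypothesis $\mathcal{L}^*_{\text{GUFM}}>0$ is what makes this non-vacuous: it rules out the degenerate interpolating regime where the GUFM optimum is attained on a non-compact face and the reduction could break down.

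The main obstacle is the explicit construction in step (i) for transformers: one must show that attention sub-blocks and the masking matrix $M$ do not obstruct the ability to route an arbitrary zero-mean unit-norm target feature configuration through $L$ blocks with $o_L(1)$ total regularized weight norm, while respecting that attention mixes tokens within a context. I would handle this by keeping $W_{QK}$ (or the query/key product) identically zero throughout — so $A_l(Z)$ is a fixed, input-independent column-stochastic matrix determined only by $M$ — and choosing $W_{VO}$ small and structured so that the attention update is a controlled $O(1/L)$ perturbation; the per-token MLP sub-blocks then do the actual transport exactly as in the ResNet case, token by token, using the extra $2V+2$ coordinates to stage the construction. A secondary technical point is making Lemma \ref{lem:GUFM_optima} quantitative enough (or re-running its compactness argument with the sequence indexed by $L$) to convert the loss gap $\delta_L\to0$ into a set-distance statement; this is routine given compactness of the GUFM feasible set and continuity of $\mathcal{L}_{\text{GUFM}}$.
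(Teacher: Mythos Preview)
Your high-level architecture is exactly the paper's: decompose $\mathcal{L}_{L,1}(\theta)=\mathcal{L}_{\text{GUFM}}(W_L,X_L)+\frac{\lambda}{2}\sum_{l<L}\|W_l\|_F^2$, observe that the final $\LN$ and determinism make $(W_L,X_L)$ genuinely GUFM-feasible (so $\mathcal{L}^*_{\text{GUFM}}\le\mathcal{L}^*_{L,1}$ for free), construct parameters with vanishing inner-block regularization to get the matching upper bound, and then feed $\tilde{\mathcal M}_L\subseteq\mathcal{M}^{\text{GUFM}}_{\delta_L}$ into Lemma~\ref{lem:GUFM_optima}. For $L$-RN1 your $O(1/L)$-per-block transport sketch is correct in spirit; the paper implements it by moving \emph{one sample at a time} along a pre-chosen curve on the zero-mean sphere, using rank-one weights and a carefully tuned bias so that ReLU kills the effect on all other samples (this is where the margin $m$ between samples enters and why $d\ge4$ is needed). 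You should be aware that ``steer $X_1$ to any target'' is not automatic: without the one-sample-at-a-time trick it is not obvious how to keep the per-layer update selective while staying rank-one. A minor slip: $\mathcal{L}_{\text{GUFM}}$ already contains $\frac{\lambda}{2}\|W_L\|_F^2$, so your displayed inequality double-counts it.

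The real gap is in the transformer case. Your plan sets $W_{QK}\equiv0$ and takes $W_{VO}=O(1/L)$ everywhere, then lets the MLP sub-blocks ``do the actual transport token by token''. But the MLP acts per token, and after the embedding layer two different contexts with the \emph{same} current token have \emph{identical} token representations; only attention can inject the history. With uniform attention and $W_{VO}=O(1/L)$, a single block separates such contexts by at most $O(1/L)$, and after $\LN$ this margin does not accumulate in any obvious way. The ResNet transport cost scales like $1/(L\,m^2)$ in the margin $m$, so if $m=O(1/L)$ the total regularization blows up rather than vanishing. The paper's fix is different from what you propose: it spends the \emph{first} block on a bespoke context-encoding step --- the (unregularized) embedding writes a position-weighted binary code, and the first attention block with $W_{VO}$ of size $\gamma(L)\to0$ \emph{slowly} (order $L^{-1/4}$) averages these codes into disjoint coordinates, guaranteeing that distinct contexts land at distance $\Theta(\gamma(L))$ apart. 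All subsequent attention layers are set to zero, and the remaining network is literally an $L$-RN1 with margin $m=\Theta(\gamma(L))$; the choice $\gamma(L)\sim L^{-1/4}$ is exactly what balances the cost $\gamma(L)^2$ of the first block against the cost $1/(L\gamma(L)^2)$ of the ResNet part. This is where the $d\ge 2V+2$ assumption is actually used (to keep the encoded history disjoint from the current-token coordinates), not merely as ``room to stage'' the MLP transport. Your proposal needs to confront this margin-versus-cost tradeoff explicitly.
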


The result above 
provides a reduction of the end-to-end training objective of a deep-enough architecture to a GUFM using the same loss. This has two important implications. First, it shows that optimal deep ResNets and transformers can represent the optimal solution of the corresponding GUFM problem. As formalized in 
Corollary \ref{cor:one_layer_cemse}, this gives a precise characterization of the structure of feature representations 
in the last layer at the global optimum -- the first result of this sort for modern architectures beyond MLPs. Second, it provides a theoretical justification for using the UFM to explain the emergence of NC, showing that the UFM  does not oversimplify the problem even when dealing with ResNets and transformers. We note that the lower bound of $2V+2$ on the dimension of transformers is for technical convenience, and it can be loosened to a lower bound that does not depend on $V$. We now give a proof sketch deferring the complete argument to Appendix \ref{app:proofs}. 

\textit{Proof sketch.} We start with the sketch for the $L$-RN1 model. Notice that $\mathcal{L}_{L,1}(\theta)=\loss_{\text{GUFM}}(W_L, X_L)+\frac{\lambda}{2}\sum_{l=1}^{L-1}\Fnorm{W_l}^2.$ The goal is to show $\loss^*_{\text{GUFM}}=\underset{L\xrightarrow[]{}\infty}{\lim} \loss^*_{L,1}$, which implies that $\loss_{\text{GUFM}}(W_L, X_L)$ for $(W_L, X_L)\in \Tilde{\mathcal M}_L$ converges to $\loss^*_{\text{GUFM}}.$ Thus, $(W_L, X_L)\in \Tilde{\mathcal M}_L$ must also belong to $\opt_\epsilon^{\text{GUFM}}$ for $\epsilon$ small enough, which by Lemma~\ref{lem:GUFM_optima} guarantees the convergence as in \eqref{eq:limsup_one_layer_general}.

Note that we can represent a one-block-deeper ResNet that perfectly copies the original ResNet by simply adding an identity block with zero weight matrices/biases and residual connection left untouched. Thus, $\loss^*_{L,1}$ is non-increasing in $L$ and it suffices to prove the limit for any sequence of $L$'s going to infinity. We will prove it by explicitly constructing a sequence of $L$-RN1 ResNets s.t.\ their losses converge to $\loss^*_{\text{GUFM}}$ as $L\to\infty$. This crucially relies on the fact that it is possible to almost perfectly fit the training data $X_0$ with ResNets so that the sum of Frobenius norms of all their layers converges to 0. This is a special property of residual networks that qualitatively differs from MLPs.

To build the intuition on why this is possible, consider a 1D example where we want to fit the label $\exp(a)$ when the input is 1 with a 1D ResNet. Let $x$ be a shared weight across all layers. Then, we need $\exp(a)=(1+x)^L$, which can be asymptotically achieved by setting $x=a/L$. Importantly, the sum of Frobenius norms $\sum_{l=1}^L \left(\frac{a}{L}\right)^2=\frac{a^2}{L}$ vanishes as $L\to\infty$.  In other words, by ``splitting'' the mapping done by a single ResNet layer into $L$ layers with smaller weights, 
the total cost is smaller. 
A similar intuition was also used in \cite{boix2025inductive}.

For a multi-dimensional ResNet and general data, 
the idea of the construction is to split the blocks of the ResNet into $N$ groups, with each group moving only a single sample (for simplicity we assume all samples are distinct in this proof sketch). In this way, it is possible to split the layers within one group into several layers implementing the same mapping with a smaller Frobenius norm. 
Each sample has a predefined smooth trajectory from its initial position to the near-optimal position under the GUFM, and the group of blocks responsible for moving this sample approximates a smooth movement along this trajectory. As the depth increases, the total cost of these layers decreases, since each 
of them gets smaller with rate $1/L$, as in the 1D computation above, thus giving the desired result. 

Next, let us consider the $L$-Tx1 model. The key observation is that transformers are basically a strict extension of ResNets, with attention layers being the only extra component. However, setting the attention layers to 0 and directly applying the result above for the $L$-RN1 model does not immediately work. In fact, for each token, we need attention layers to acquire information from previous tokens, which may be useful to fit the label. At the same time, if we want the sum of Frobenius norms of all the layers to converge to zero, so must all the key and query matrices, which makes attention scores in all layers necessarily converge to uniform across the entire past. 

The solution is to design the embedding layer and the first transformer block so that distinct contexts of $X_0$ remain distinct and their distances do not converge to $0$ too fast (this would disrupt our construction for ResNets which implicitly assumes that the initial distances between samples are constant w.r.t.\ $L$). The embedding layer and first attention layer encode the contexts so that the $j$-th entry contains the history (encoded in binary) of all the tokens belonging to the $j$-th class from the past. At this point, a slight adjustment of the construction for ResNets finishes the proof. \qed

We highlight that Theorem \ref{thm:one_layer_general} holds for any continuous loss. By considering CE or MSE for which the global optima of the corresponding GUFMs are collapsed by Lemma~\ref{lem:ufm_ce_mse}, the emergence of collapse in ResNets and transformers is readily obtained, assuming the following about training data:

\begin{assumption}\label{ass:uniqueness}
For the ResNet architecture, we assume all training samples in $X$ to be unique. For the transformer architecture, we assume the labels $Y$ to be uniquely determined by the context, \ie, two identical contexts in two different input sequences will be assigned the same label. 
\end{assumption}

\begin{restatable}{corollary}{onelayercemse}
\label{cor:one_layer_cemse}
Let the architecture be $L$-RN1 or $L$-Tx1 for x $\in \{1, 2\}$. Assume the training data $(X,Y)$ satisfies Assumption~\ref{ass:uniqueness} and all the assumptions of Theorem~\ref{thm:one_layer_general}. Using CE or MSE loss, all global optima of the optimization  problem~\eqref{eq:nn_opt_general} exhibit approximate neural collapse which gets tighter as $L$ increases. 
\end{restatable}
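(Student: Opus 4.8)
The plan is to derive the corollary by composing Theorem~\ref{thm:one_layer_general} with Lemma~\ref{lem:ufm_ce_mse} and then converting ``closeness to the GUFM optimal set'' into ``smallness of the NC metrics''. Recall that, by Definition~\ref{def:nc-ce}, the NC metrics ``exhibited by the trained network'' are exactly $\operatorname{NC1},\operatorname{NC2},\operatorname{NC3}$ evaluated at the pair $(W_L,h_\theta(X))$. Theorem~\ref{thm:one_layer_general} will tell us that, for every global optimum $\theta$ of~\eqref{eq:nn_opt_general}, this pair lies within a depth-vanishing distance of $\mathcal M^{\text{GUFM}}_0$, while Lemma~\ref{lem:ufm_ce_mse} will tell us that every point of $\mathcal M^{\text{GUFM}}_0$ has all NC metrics equal to zero; a continuity/compactness argument then finishes the job uniformly over all global optima.

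First I would check the hypotheses of the two cited results. By Assumption~\ref{ass:uniqueness}, for $L$-RN1 the inputs are pairwise distinct (so the equivalence relation of Theorem~\ref{thm:one_layer_general} is trivial) and for $L$-Tx1 identical contexts carry identical labels; in both cases $\mathcal R$ relates only indices within the same class, which is exactly the hypothesis of Lemma~\ref{lem:ufm_ce_mse}, so the corresponding GUFM optima have $\operatorname{NC1}=\operatorname{NC3}=0$ together with the appropriate $\operatorname{NC2}$ metric equal to $0$ (NC2A or NC2B as specified after Definition~\ref{def:nc-ce}). I also need $\mathcal L^*_{\text{GUFM}}>0$, as required by Theorem~\ref{thm:one_layer_general}: for CE this is automatic since $\mathcal L_{\text{CE}}(Z,Y)>0$ for any finite logit matrix $Z$ and the GUFM minimum is attained at a finite $W$; for MSE it holds because a feasible $(W,X)$ with vanishing data term must satisfy $WX=Y\neq0$, hence $W\neq0$ and $\tfrac\lambda2\Fnorm W^2>0$, while off that slice the data term itself is positive, so again the (attained) minimum is strictly positive. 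Theorem~\ref{thm:one_layer_general} then yields $\delta_L:=\operatorname{distmax}(\tilde{\mathcal M}_L\backslash\mathcal M^{\text{GUFM}}_0,\mathcal M^{\text{GUFM}}_0)\to0$ as $L\to\infty$.

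Next I would set up the continuity transfer. For a global optimum $\theta$, the identity $\mathcal L_{L,1}(\theta)=\mathcal L_{\text{GUFM}}(W_L,X_L)+\tfrac\lambda2\sum_{l<L}\Fnorm{W_l}^2$ together with the depth-uniform upper bound on $\mathcal L^*_{L,1}$ established in the proof of Theorem~\ref{thm:one_layer_general} gives $\Fnorm{W_L}\le R$ with $R$ independent of $L$; and since $h_\theta$ ends in a LayerNorm, the columns of $h_\theta(X)$ have norm $\sqrt d$, are mean-zero, and coincide whenever the corresponding contexts coincide. Hence $(W_L,h_\theta(X))$ lies in the fixed compact, GUFM-feasible set $\mathcal K=\{(W,X):\Fnorm W\le R,\ \Fnorm{x_i}=\sqrt d,\ x_i^\top\1=0\}$, which also contains $\mathcal M^{\text{GUFM}}_0$. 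On a small enough closed neighbourhood $\mathcal N\subseteq\mathcal K$ of the compact set $\mathcal M^{\text{GUFM}}_0$ the maps $\operatorname{NC1},\operatorname{NC2A},\operatorname{NC2B},\operatorname{NC3}$ are continuous: on $\mathcal M^{\text{GUFM}}_0$ the optimal $W$ is a simplex ETF up to rotation and scale, so $W\neq0$, every row $W_{k:}\neq0$, and $\Tr(\Sigma_B)$ is bounded away from $0$, and all these persist on $\mathcal N$, making the ratio defining $\operatorname{NC1}$ and the cosine similarities defining $\operatorname{NC3}$ well defined and continuous there. Being uniformly continuous on the compact $\mathcal N$ and vanishing on $\mathcal M^{\text{GUFM}}_0$, each NC metric then satisfies $\operatorname{NC}_\bullet(W,X)\le\omega(\operatorname{dist}((W,X),\mathcal M^{\text{GUFM}}_0))$ on $\mathcal N$ for a modulus of continuity $\omega$ depending only on $\mathcal N$. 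Once $L$ is large enough that $\delta_L$ is below the radius of $\mathcal N$, every global optimum $\theta$ has $\operatorname{dist}((W_L,h_\theta(X)),\mathcal M^{\text{GUFM}}_0)\le\delta_L$, so all of its NC metrics are at most $\omega(\delta_L)$; since $\delta_L\to0$ this uniform bound tends to $0$, which is the claimed approximate collapse that tightens with depth. Feeding a quantitative version of Theorem~\ref{thm:one_layer_general} into this step turns $\omega(\delta_L)$ into an explicit rate in $L$.

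The main obstacle I expect is the last step: making sure the neighbourhood $\mathcal N$ and the modulus $\omega$ can be fixed once and for all, independently of $L$ and of the particular global optimum, rather than a neighbourhood that degenerates. Concretely this amounts to the two uniform lower bounds $\Tr(\Sigma_B)\ge c>0$ and $\min_k\Fnorm{W_{k:}}\ge c'>0$ on a fixed-radius neighbourhood of $\mathcal M^{\text{GUFM}}_0$; both follow from the tight characterization of $\mathcal M^{\text{GUFM}}_0$ in Lemma~\ref{lem:ufm_ce_mse} (which in particular makes $\Fnorm W$ and the class-mean geometry essentially constant on that set) combined with its compactness, but this is the point that must be argued with care. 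Everything else is a mechanical chaining of the two cited statements with elementary topology.
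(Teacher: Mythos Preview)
Your proposal is correct and follows the same approach as the paper, which also derives the corollary by combining Theorem~\ref{thm:one_layer_general} with Lemma~\ref{lem:ufm_ce_mse} after noting that Assumption~\ref{ass:uniqueness} forces the equivalence relation $\mathcal R$ to stay within classes. You are considerably more explicit than the paper about verifying $\mathcal L^*_{\text{GUFM}}>0$ and about the continuity/compactness step that converts closeness to $\mathcal M^{\text{GUFM}}_0$ into smallness of the NC metrics; the paper treats both as immediate and states the proof in a single sentence.
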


We make several remarks about this result.

\textbf{Rate of convergence.} While the results are stated asymptotically for simplicity, one can readily recover a convergence rate of the global optimum to NC from the argument. 
In particular, since the total regularization of the layers scales as $L^{-1},$ the global optima can only be suboptimal w.r.t.\ the GUFM objective with the same scaling. Then, assuming a differentiable loss (\eg, CE or MSE), the distance from the optima scales as the inverse of the power in the Taylor approximation of the loss at the global optima in the flattest direction, up to logarithmic factors that come from making a finer approximation. Now, for the CE loss, the leading term is quadratic: by using the chain rule, the slope of CE at the optimum is non-zero, and the sum of exponentials of dot-products between $X, W$ is quadratic as we approach the ETF. Thus, the convergence in distance is $\tilde O(L^{-1/2})$, where $\tilde O$ omits logarithmic factors. For the MSE loss we compute by error analysis in the proof of Lemma~\ref{lem:ufm_ce_mse} that the convergence rate is also $\tilde O(L^{-1/2}).$ 

\textbf{Language modeling.} 
When considering the transformer architecture, we require the labels to be unique given a specific context. While this is a realistic assumption in vision or language classification tasks (\eg, sentiment analysis, harmful content classification, spam detection), it does not apply to language pretraining, where a single context may have many different continuations. In fact, in the setting of non-unique continuations, neural collapse is \textit{not} to be expected, and the optimal structure was discussed \cite{NEURIPS2024_2858e880, zhao2024implicit} by using a form of UFM. We remark that Theorem~\ref{thm:one_layer_general} shows that the optimal solutions identified in these works are exhibited by transformers, as long as they satisfy the conditions in \eqref{eq:gufm}. This is the case, for instance, in some symmetric settings, see Proposition 2 in \cite{zhao2024implicit} with a slight modification in the underlying UFM (the authors consider weight decay instead of norm constraints on the features), where the optimal limiting solution is indeed collapse. In non-symmetric cases, while NC is not expected to be optimal (as in the case with class imbalance \cite{thrampoulidis2022imbalance}), transformers still represent the optimal zero-mean solution of the underlying UFM, whatever that is. This allows future work to focus on solving the application-relevant UFM in the corresponding setting and then use Theorem~\ref{thm:one_layer_general} to conclude that the solutions are globally optimal end-to-end. 

\textbf{Deep neural collapse.} Although our theory focuses on last-layer geometry, the analysis sheds some light on the collapse in the earlier layers as well. In particular, one can readily obtain that any finite number of layers at the end of the network converges to neural collapse (with the exception of NC3 which has a different formulation for multi-layer collapse). Note that adding a residual connection (as in ResNets and transformers) resolves the inconsistency of deep UFMs pointed out in \cite{sukenik2024neural}, where it is shown that the global optima of the deep UFM in the multi-class setting do \emph{not} exhibit neural collapse. In fact, the optimal solution of a deep UFM with residual connections is obtained by simply copying the shallow UFM in the first layer and setting all remaining layers to $0$. We also remark that, from the argument of Theorem \ref{thm:one_layer_general}, it follows that the global optima of the last $\tilde L$ layers of the network ($\tilde L$ being a constant independent of $L$) converge to the global optima of the corresponding deep GUFM with residual connections and depth $\tilde L$. 

In contrast, understanding the emergence of neural collapse for a small, but constant fraction of the final layers of the network appears to require a different approach. Intuitively, 
if the network starts processing all samples at once from some layer onwards (which is expected to improve the loss w.r.t.\ our construction), then the collapse is progressive and occurs to some extent already in a constant fraction of the final layers, see also the discussion in \cite{wang2024progressive}.

\subsection{Deep double-layer architectures are collapsed at the global optimum with vanishing regularization}

Let us now consider ResNets with two linear layers per block and transformers with two linear layers per MLP sub-block (the number of matrices in the attention sub-block does not affect the result). Then, we show that neural collapse is globally optimal, provided that the regularization strength in all layers except the last one decreases with the depth $L$. 

\begin{restatable}{theorem}{twolayersgeneral}
\label{thm:two_layers_general}
Let the architecture be $L$-RN2 or $L$-Tx2 for x $\in\{1, 2\}$. Assume the inner dimension of the $L$-Tx1 is at least $2V+2$ and the inner dimension of $L$-RN1 is at least 4. 
Consider the optimization problem 
\begin{align}\label{eq:two_layers_general}
\underset{\theta}{\min} \,\, &\mathcal{L}(f_\theta(X), Y)+\frac{\lambda_L}{2}\Fnorm{W_L}^2+\frac{\lambda(L)}{2}\norm{\bar\theta}^2,
\end{align}
where $\lambda_L$ is a regularization on the weight matrix of the last layer that does not depend on $L$ and $\lambda(L)$ is a depth-dependent regularization s.t.\ $\lambda(L)=o(\log(L)^{-1}).$ Consider the corresponding GUFM with regularization $\lambda_L.$ If $\mathcal{L}^*_{\text{GUFM}}>0,$ then 
\begin{equation}\label{eq:limsup_two_layers_general}
\underset{L\xrightarrow[]{}\infty}{\limsup} \,\,\, \rm{distmax}(\Tilde{\mathcal{M}}_L \backslash \mathcal{M}^{\text{GUFM}}_0, \mathcal{M}^{\text{GUFM}}_0) = 0.
\end{equation}
\end{restatable}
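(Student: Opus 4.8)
The plan is to run the same two-step scheme as in Theorem~\ref{thm:one_layer_general}: first rewrite the regularized training loss so that the last layer together with the penultimate features form a feasible point of the corresponding GUFM, reducing the statement to the limit $\mathcal{L}^*_{L,2}\to\mathcal{L}^*_{\text{GUFM}}$; and second, prove that limit by an explicit construction. For $L$-RN2 we have $f_\theta(X)=W_LX_L$, the only regularized parameters beyond $W_L$ are the middle matrices $W_{l,1},W_{l,2}$, and $X_L$ is LayerNorm-normalized (its columns have norm $\sqrt d$, are mean-zero, and agree on coinciding samples/identical contexts), so
\begin{equation}
\mathcal{L}_{L,2}(\theta)=\mathcal{L}_{\text{GUFM}}(W_L,X_L)+\tfrac{\lambda(L)}{2}\sum_{l<L}\bigl(\Fnorm{W_{l,1}}^2+\Fnorm{W_{l,2}}^2\bigr),
\end{equation}
and the same identity holds for $L$-Tx2 once the attention parameters are included. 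Since $(W_L,X_L)$ is GUFM-feasible and the middle term is nonnegative, $\mathcal{L}^*_{L,2}\ge\mathcal{L}^*_{\text{GUFM}}$ for every $L$, so only $\limsup_L\mathcal{L}^*_{L,2}\le\mathcal{L}^*_{\text{GUFM}}$ requires work.

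For the construction I would reuse, essentially verbatim, the geometry of the proof of Theorem~\ref{thm:one_layer_general}: fix a GUFM optimum $(W^*,X^*)$, use the embedding layer (and, for transformers, the first block, designed as there so that distinct contexts stay distinct with pairwise distances bounded below uniformly in $L$) to bring the data to a reference configuration $X_1$, split the remaining blocks into groups---one per sample---and let the $j$-th group transport the $j$-th feature along a fixed smooth trajectory on the sphere toward $x_j^*$, each block implementing one Euler step $y\mapsto y+A_ly$ of size $\Fnorm{A_l}=O(1/L)$ before its LayerNorm; finally set $W_L=W^*$. The only new point is how a two-linear-layer block realizes such a step: pick the unregularized bias $b_{l,1}$ large enough (possible since the features have bounded norm) that $W_{l,1}y+b_{l,1}>0$ on all samples, so $\sigma$ acts as the identity and the block computes $y+W_{l,2}W_{l,1}y+(W_{l,2}b_{l,1}+b_{l,2})$; factor $A_l=W_{l,2}W_{l,1}$ with the \emph{balanced} choice $\Fnorm{W_{l,1}}=\Fnorm{W_{l,2}}=\Fnorm{A_l}^{1/2}$ (which uses the inner-dimension assumption, as it must exceed $\operatorname{rank}(A_l)$); and cancel the constant term with the unregularized bias $b_{l,2}=-W_{l,2}b_{l,1}$. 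For transformers, set all attention matrices except those of the first block to zero, which is free regardless of whether the attention sub-block has one or two matrices.

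The decisive difference from Theorem~\ref{thm:one_layer_general} lies in counting the middle-layer penalty. A \emph{single}-linear-layer block realizing step $A_l$ costs $\Fnorm{A_l}^2=O(L^{-2})$, so the $L$ middle layers cost $O(L^{-1})\to 0$ and no shrinkage of the regularization is needed; but a two-layer block pays for \emph{two} matrices whose \emph{product} is $A_l$, and by AM--GM the balanced factorization is optimal, giving per-block cost $\Fnorm{W_{l,1}}^2+\Fnorm{W_{l,2}}^2=2\Fnorm{A_l}=O(L^{-1})$ and hence a total middle-layer cost that does \emph{not} vanish---our construction keeps it $O(\log L)$. Thus $\mathcal{L}_{L,2}(\theta_L)=\mathcal{L}_{\text{GUFM}}(W_L,X_L)+\tfrac{\lambda(L)}{2}O(\log L)$, and since $X_L\to X^*$ (so the first term tends to $\mathcal{L}^*_{\text{GUFM}}$ by continuity of the loss) while $\lambda(L)=o(\log(L)^{-1})$ kills the second term, we obtain $\limsup_L\mathcal{L}^*_{L,2}\le\mathcal{L}^*_{\text{GUFM}}$, hence equality. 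Finally, for any global optimum $(W_L,X_L)\in\Tilde{\mathcal{M}}_L$ we have $\mathcal{L}^*_{\text{GUFM}}\le\mathcal{L}_{\text{GUFM}}(W_L,X_L)\le\mathcal{L}^*_{L,2}\to\mathcal{L}^*_{\text{GUFM}}$, so $\Tilde{\mathcal{M}}_L\subseteq\mathcal{M}^{\text{GUFM}}_\epsilon$ for every fixed $\epsilon>0$ once $L$ is large enough, and Lemma~\ref{lem:GUFM_optima} (with monotonicity of $\operatorname{distmax}$ in its first argument) yields \eqref{eq:limsup_two_layers_general}.

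The main obstacle is precisely this accounting: the ReLU sandwiched between two linear maps forces the per-block cost to scale linearly rather than quadratically in the step size, so the middle layers retain a non-vanishing ($O(\log L)$ at worst) regularization footprint---this is exactly why $\lambda(L)$ must be sent to zero, and why one cannot hope for the constant-$\lambda$ statement of Theorem~\ref{thm:one_layer_general} in this regime. The remaining technical care goes into verifying that the ReLU linearization holds simultaneously for all samples and is not spoiled by the block's own LayerNorm, that the balanced factorization fits within the rank budget set by the inner dimension, and---for transformers---that it dovetails with the history-encoding first block inherited from Theorem~\ref{thm:one_layer_general}.
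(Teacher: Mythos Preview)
Your high-level plan and the cost accounting are right, and they match the paper's argument: rewrite the loss as the GUFM objective plus the middle-layer penalty, construct a network whose middle-layer cost is $O(\log L)$ so that $\lambda(L)=o(\log(L)^{-1})$ kills it, and conclude via Lemma~\ref{lem:GUFM_optima}. But the specific way you propose to realize a two-layer block has a genuine gap.

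You inherit from Theorem~\ref{thm:one_layer_general} the per-sample transport scheme: the blocks in the $j$-th group move \emph{only} the $j$-th sample along its trajectory while leaving all others fixed. In the one-layer proof this selectivity is delivered by the ReLU together with the margin-based bias: the pre-activation is positive on the target sample and strictly negative on every other sample, so $\sigma$ zeroes the latter out. Your proposed mechanism---push $b_{l,1}$ large so that $\sigma$ acts as the identity on \emph{all} samples, then realize $y\mapsto y+A_ly$ with $A_l=W_{l,2}W_{l,1}$ and cancel the constant via $b_{l,2}$---destroys exactly this selectivity. A linear map $A_l$ cannot move one sample and fix the rest unless the rest lie in $\ker A_l$; arranging that would require $d\ge N$ and a very particular geometry, neither of which is assumed (the ResNet hypothesis is only $d\ge 4$). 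As written, every block in the $j$-th group perturbs every other sample, the trajectories and margin conditions imported from Theorem~\ref{thm:one_layer_general} break down, and there is no argument that the construction lands near the GUFM optimum.

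The paper's fix keeps the ReLU doing its selecting job rather than linearizing it. It takes the one-layer rank-one construction $W_{ki}^l,b_{ki}^l$ with scale $\alpha_{ki}^l$, sets $W_{ki}^{l,1},b_{ki}^{l,1}$ to the same formulas but with $\sqrt{\alpha_{ki}^l}$ in place of $\alpha_{ki}^l$, and takes $W_{ki}^{l,2}$ to be $\sqrt{\alpha_{ki}^l}$ times the rank-one projection onto the (one-dimensional) image of $\sigma(W_{ki}^{l,1}\,\cdot+b_{ki}^{l,1})$ on the target sample. The composite $W_{ki}^{l,2}\sigma(W_{ki}^{l,1}\,\cdot+b_{ki}^{l,1})+b_{ki}^{l,2}$ then reproduces the one-layer update exactly---including the ReLU kill on non-target samples---while splitting the scale so that $\Fnorm{W_{ki}^{l,1}}^2+\Fnorm{W_{ki}^{l,2}}^2=2\alpha_{ki}^l$. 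Summing over the blocks gives a constant for the first $N$ groups and $O(\log L_2)$ for the last $\bar K$ groups, which are precisely the totals you anticipated; the rest of your write-up then goes through unchanged.
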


The reason why the regularization is required to be vanishing can be already seen from the 1D example mentioned in the proof sketch of Theorem \ref{thm:one_layer_general}: in order to ensure that $(1+x^2)^L$ converges to $\exp(a)$ as $L\to\infty$, one needs to pick $x=\frac{\sqrt{a}}{\sqrt{L}}$, which implies that the sum of squares $\sum_{l=1}^L\bigl(\frac{\sqrt{a}}{\sqrt{L}}\bigr)^2$ is of constant order w.r.t.\ $L$. In fact, the requirement on vanishing regularization is necessary for the statement to be true, and the result also cannot hold if both $\lambda(L)$ and $\lambda_L$ are vanishing. An additional discussion on this point, together with a concrete dataset for which collapse cannot be reached, are provided in Appendix~\ref{app:two_layer_vanishing_reg}. Understanding the structure of the optimal representations for double-layer architectures in the regime of constant regularization represents an exciting future direction.

The proof of Theorem \ref{thm:two_layers_general} is similar to that of Theorem~\ref{thm:one_layer_general}. In particular, the first layers of the blocks are defined in the same way, and the second layers are set to act as a projection matrix on the space spanned by the output of the first layer, which has rank 1. Furthermore, the scalings of these layers are split in identical square roots of the scaling of the original layer. Thus, the sum over the squared Frobenius norms is constant w.r.t.\ $L$, which requires $\lambda(L)$ to vanish in $L$. The detailed proof is deferred to Appendix \ref{app:proofs}. We conclude the section by stating the approximate optimality of NC in the global optima of double-layer architectures under CE or MSE loss.

\begin{restatable}{corollary}{twolayercemse}
\label{cor:two_layer_cemse}
Let the architecture be $L$-RN2 or $L$-Tx2 for x $\in\{1, 2\}$.  Assume the training data $(X,Y)$ satisfies Assumption~\ref{ass:uniqueness} and 
the assumptions of Theorem~\ref{thm:two_layers_general}. Using CE or MSE loss, all global optima of the optimization problem~\eqref{eq:two_layers_general} exhibit approximate neural collapse which gets tighter as $L$ increases. 
\end{restatable}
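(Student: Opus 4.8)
The plan is to derive Corollary~\ref{cor:two_layer_cemse} by chaining Theorem~\ref{thm:two_layers_general}, Lemma~\ref{lem:ufm_ce_mse} and the continuity of the neural collapse metrics, exactly as Corollary~\ref{cor:one_layer_cemse} is obtained from Theorem~\ref{thm:one_layer_general}. The first step is to check that the hypothesis $\mathcal{L}^*_{\text{GUFM}}>0$ of Theorem~\ref{thm:two_layers_general} is automatic here for both losses once $\lambda_L>0$: for CE one has $\mathcal{L}_{\text{CE}}(WX,Y)>0$ for every finite $W$, so the GUFM value is strictly positive; for MSE, $\mathcal{L}_{\text{GUFM}}(W,X)=0$ would force both $W=0$ (from the regularizer $\tfrac{\lambda_L}{2}\Fnorm{W}^2$) and $WX=Y$, hence $Y=0$, contradicting that $Y$ is a nonzero one-hot matrix. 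With this, Theorem~\ref{thm:two_layers_general}, applied with $\mathcal{L}\in\{\mathcal{L}_{\text{CE}},\mathcal{L}_{\text{MSE}}\}$ and the equivalence relation $\mathcal{R}$ induced by pairs of coinciding samples (identical contexts for transformers), yields $\limsup_{L\to\infty}\operatorname{distmax}(\Tilde{\mathcal{M}}_L\backslash\mathcal{M}^{\text{GUFM}}_0,\mathcal{M}^{\text{GUFM}}_0)=0$, where $\Tilde{\mathcal{M}}_L$ now collects the pairs $(W_L,h_\theta(X))$ induced by the global optima $\theta$ of~\eqref{eq:two_layers_general}.

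The second step identifies $\mathcal{M}^{\text{GUFM}}_0$ via Lemma~\ref{lem:ufm_ce_mse} (for the GUFM with regularization $\lambda_L>0$). Under Assumption~\ref{ass:uniqueness}, for ResNets all samples are distinct so $\mathcal{R}$ is trivial, and for transformers identical contexts carry the same label; in either case $\mathcal{R}$ relates only samples of the same class, which is exactly the hypothesis of Lemma~\ref{lem:ufm_ce_mse}. Hence every $(W,X)\in\mathcal{M}^{\text{UFM-CE}}_0$ has $\operatorname{NC1}=\operatorname{NC2A}=\operatorname{NC3}=0$ and every $(W,X)\in\mathcal{M}^{\text{UFM-MSE}}_0$ has $\operatorname{NC1}=\operatorname{NC2B}=\operatorname{NC3}=0$ (or $\operatorname{NC2A}$ if the last-layer bias is present and unregularized). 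Moreover these optima are non-degenerate for continuity purposes: the constraint $\norm{x_i}=\sqrt{d}$ keeps the features away from $0$; at the optimum $W\neq 0$ (a small $W$ aligned with the collapsed geometry strictly decreases the loss relative to $W=0$), so $\norm{WW^T}_F>0$ and every $W_{k:}\neq 0$; and since the rows of $W$ then form a scaled simplex-ETF/orthogonal frame, the class-means — proportional to them by $\operatorname{NC3}=0$ — are pairwise distinct, so $\Tr(\Sigma_B)>0$. Therefore the denominators in Definition~\ref{def:nc-ce} are bounded away from $0$ on a neighbourhood $\mathcal{U}$ of $\mathcal{M}^{\text{GUFM}}_0$ inside the (compact) feasible set, and the four NC metrics are continuous on the closure $\overline{\mathcal{U}}$.

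The third step assembles the two. Since $\operatorname{distmax}(\Tilde{\mathcal{M}}_L\backslash\mathcal{M}^{\text{GUFM}}_0,\mathcal{M}^{\text{GUFM}}_0)\to 0$, for $L$ large enough $\Tilde{\mathcal{M}}_L\subseteq\mathcal{U}$; by compactness of $\overline{\mathcal{U}}$ each NC metric is uniformly continuous there and vanishes on $\mathcal{M}^{\text{GUFM}}_0$, so the supremum over $(W,X)\in\Tilde{\mathcal{M}}_L$ of each of $\operatorname{NC1},\operatorname{NC2A},\operatorname{NC2B},\operatorname{NC3}$ is controlled by the modulus of continuity evaluated at $\operatorname{distmax}(\Tilde{\mathcal{M}}_L,\mathcal{M}^{\text{GUFM}}_0)$, hence tends to $0$. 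This is precisely the assertion that every global optimum of~\eqref{eq:two_layers_general} is approximately collapsed, with the approximation tightening in $L$. For a quantitative rate I would invoke the discussion after Corollary~\ref{cor:one_layer_cemse}: the construction in Theorem~\ref{thm:two_layers_general} has total squared Frobenius norm of order $1$, so the end-to-end optimum is suboptimal for the GUFM objective by $\tilde O(\lambda(L))$, and the Taylor expansion of the differentiable loss at its (quadratically curved) GUFM optimum gives distance — and hence each NC metric — of order $\tilde O(\sqrt{\lambda(L)})=\tilde o(1)$ since $\lambda(L)=o(\log(L)^{-1})$.

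The only genuinely delicate point is the continuity/non-degeneracy argument in the second step: the NC metrics are ratios undefined exactly where $\Tr(\Sigma_B)$, $\norm{WW^T}_F$, or some $\norm{W_{k:}},\norm{x_{ki}}$ vanishes, so one must confirm that none of these occur on $\mathcal{M}^{\text{GUFM}}_0$ and that this persists on a neighbourhood — this is what licenses transferring the set-distance conclusion of Theorem~\ref{thm:two_layers_general} into uniform smallness of the scalar NC functionals. Everything else is a direct transcription of the reasoning behind Corollary~\ref{cor:one_layer_cemse}.
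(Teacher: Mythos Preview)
Your proposal is correct and follows essentially the same approach as the paper, which proves the corollary in one sentence as a ``straightforward combination of Lemma~\ref{lem:ufm_ce_mse} and Theorem~\ref{thm:two_layers_general}'' together with Assumption~\ref{ass:uniqueness}. You have simply made explicit several points the paper leaves implicit --- the verification that $\mathcal{L}^*_{\text{GUFM}}>0$, the non-degeneracy of the denominators in the NC metrics near the optimum, and the continuity argument that converts set-distance convergence into uniform smallness of the NC functionals --- none of which the paper spells out.
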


\section{Experimental results}\label{sec:experiments}

Our theoretical results suggest an improvement of the NC metrics at the global optima as the depth increases. To empirically verify whether this effect is already present at moderate depths and for solutions found by gradient descent, we train ResNets and transformers on MNIST \cite{krizhevsky09}, CIFAR10 \cite{lecun1998gradient} and IMDB \cite{imdb} with increasing depths in $\{2, 3, 5, 8, 13, 21, 34\}$. 
The hidden dimension is $64$, the learning rate $0.005$ for vision and $0.001$ for language and the (constant) regularization $0.005$ for architectures having one linear layer per block and $0.005/L$ for architectures having two linear layers per block. Each setting is trained for 5 different random seeds for 5000 epochs on CE loss, the results are averaged, and the error bars at one standard deviation are reported.  We use pre-LN transformers for language experiments and, due to training instabilities, only report the runs which converged by the end of the training. See Appendix~\ref{app:additional_exps} for additional experimental results.

\begin{figure}
    \centering
    \includegraphics[width=0.31\linewidth]{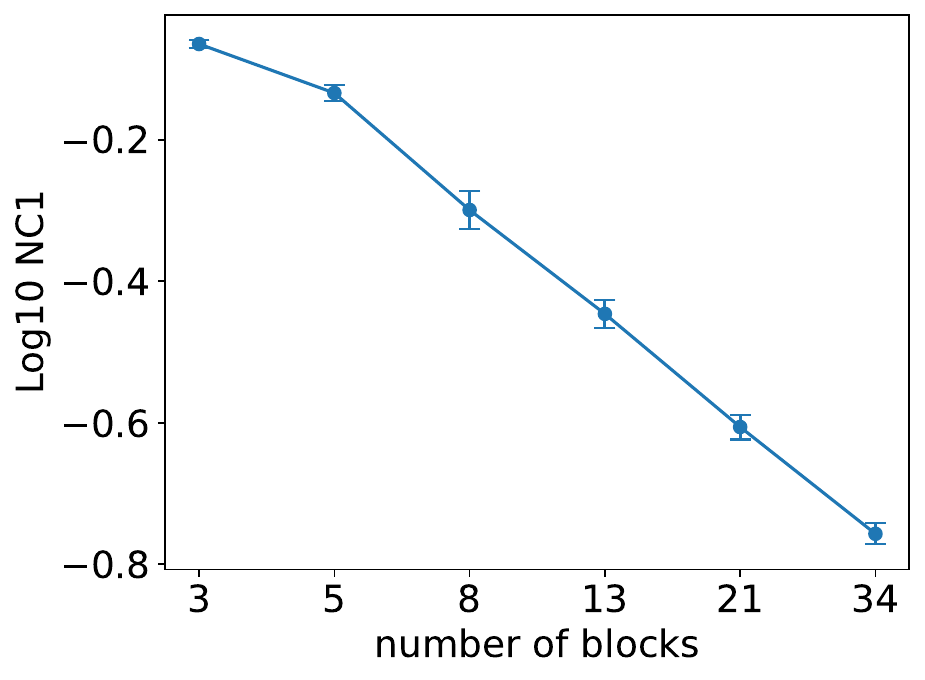}
    \includegraphics[width=0.31\linewidth]{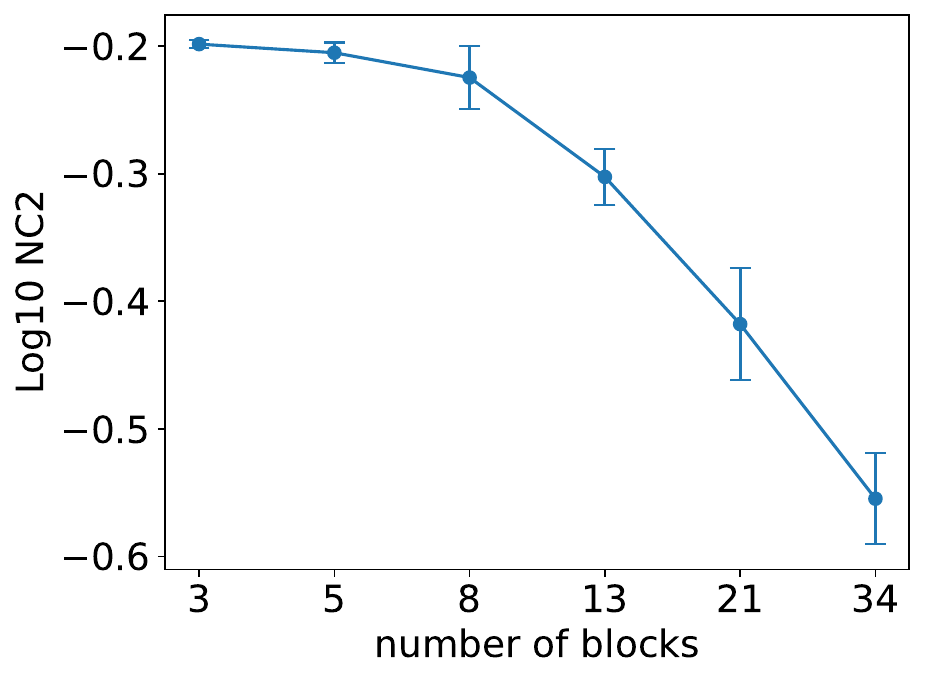}
    \includegraphics[width=0.31\linewidth]{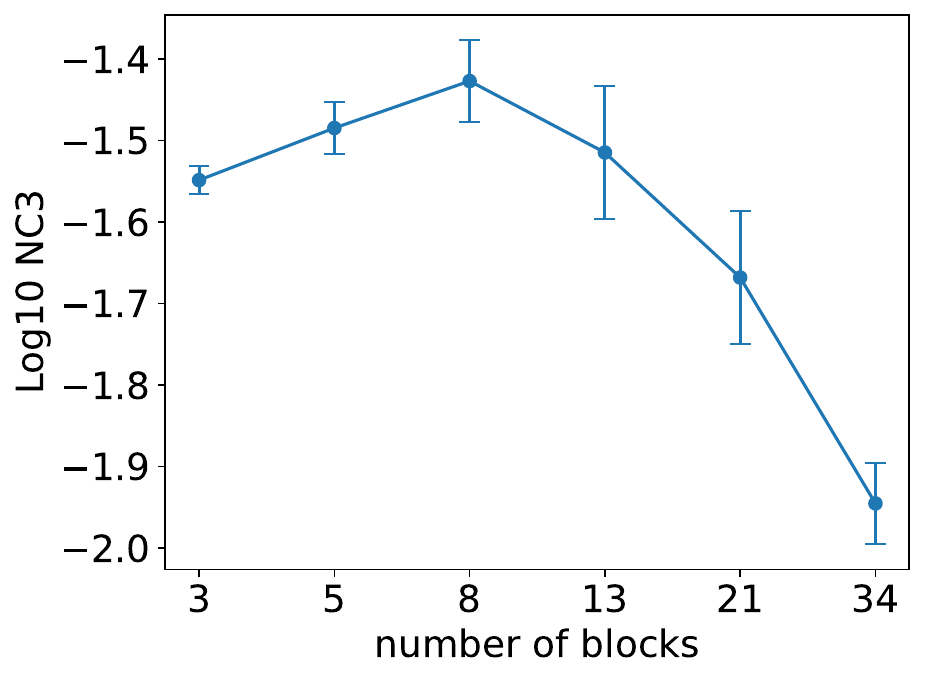}
    \includegraphics[width=0.31\linewidth]{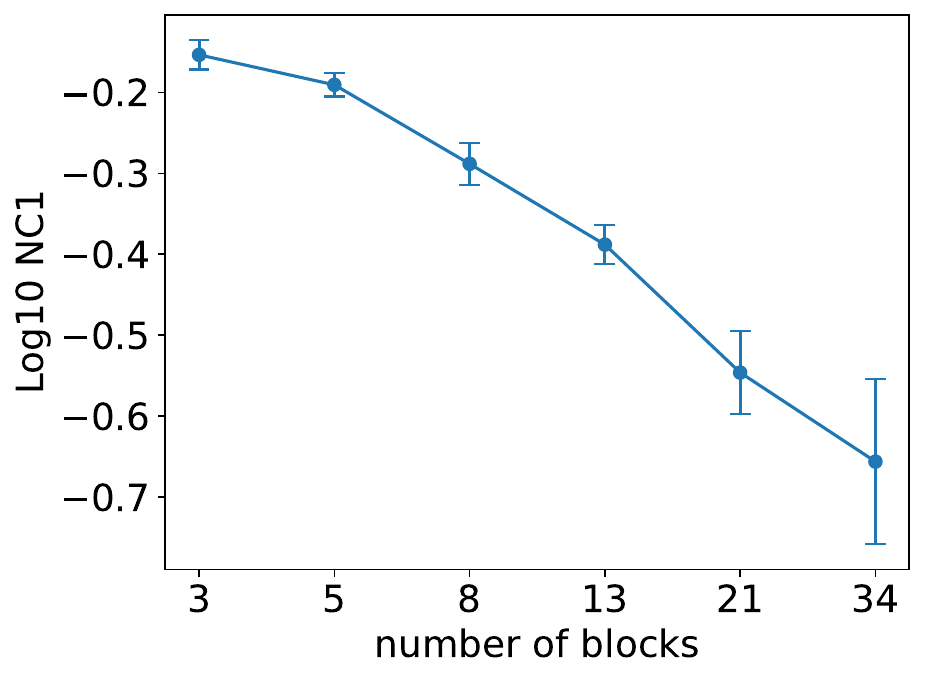}
    \includegraphics[width=0.31\linewidth]{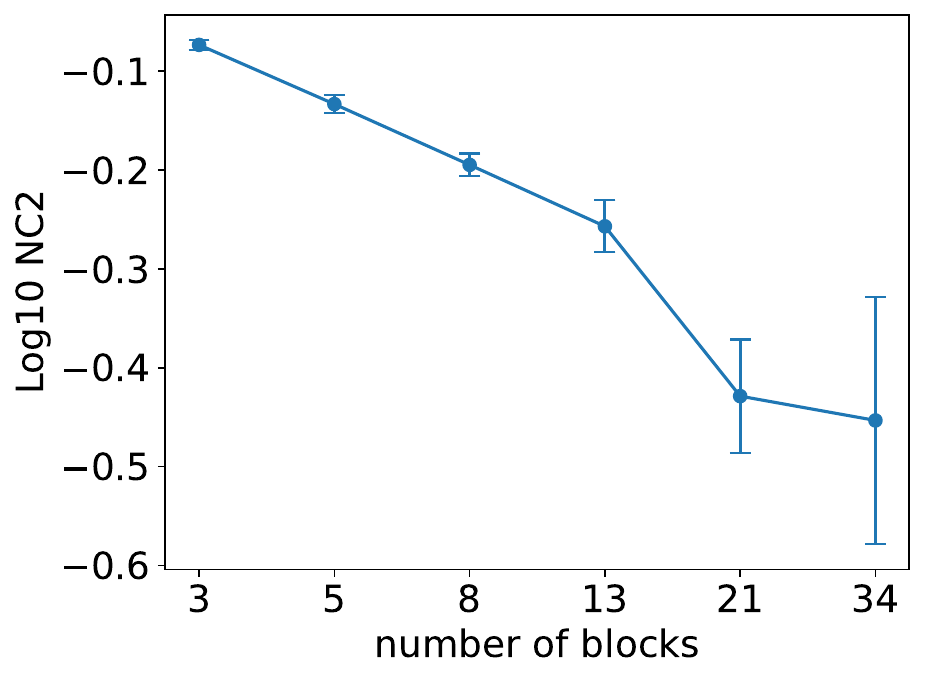}
    \includegraphics[width=0.31\linewidth]{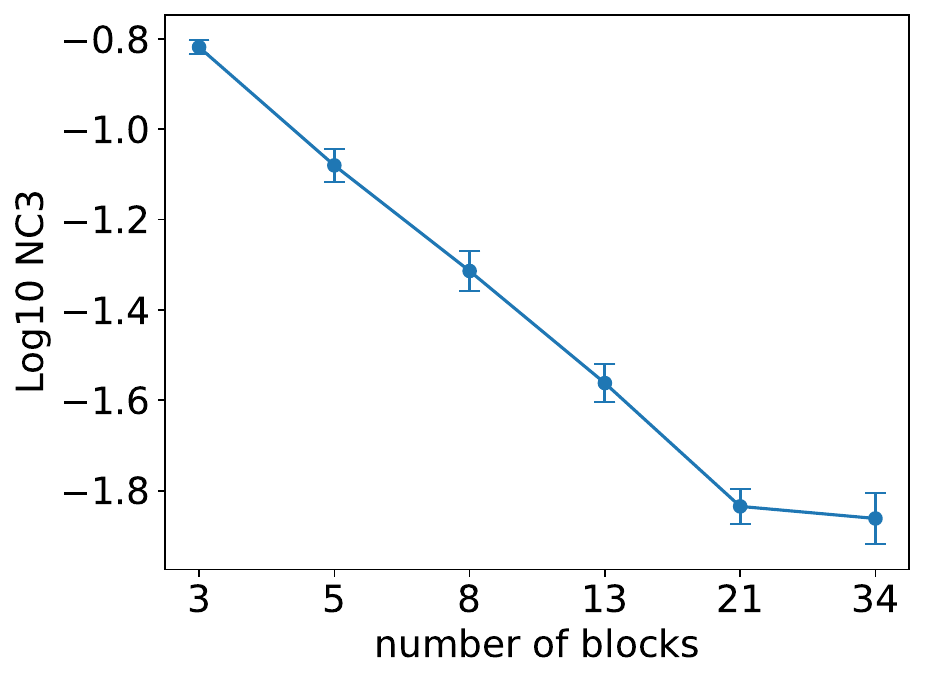}
    \includegraphics[width=0.31\linewidth]{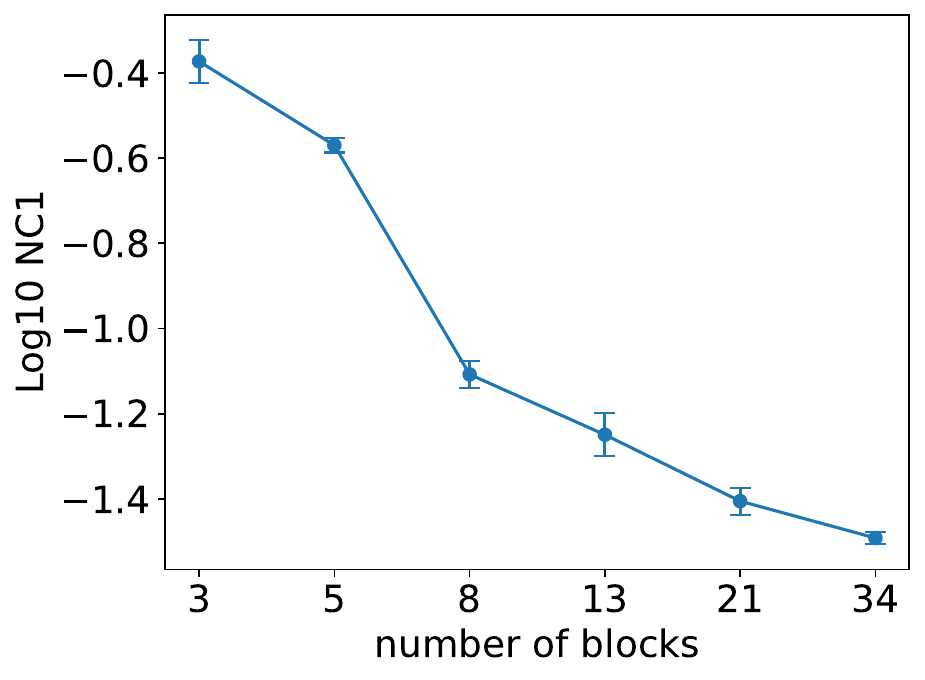}
    \includegraphics[width=0.31\linewidth]{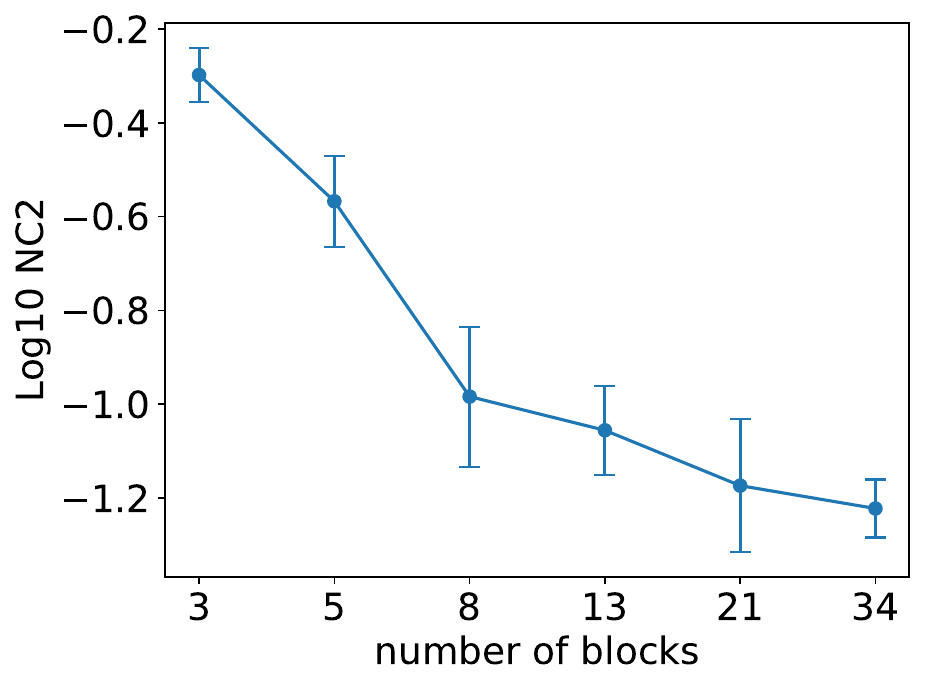}
    \includegraphics[width=0.31\linewidth]{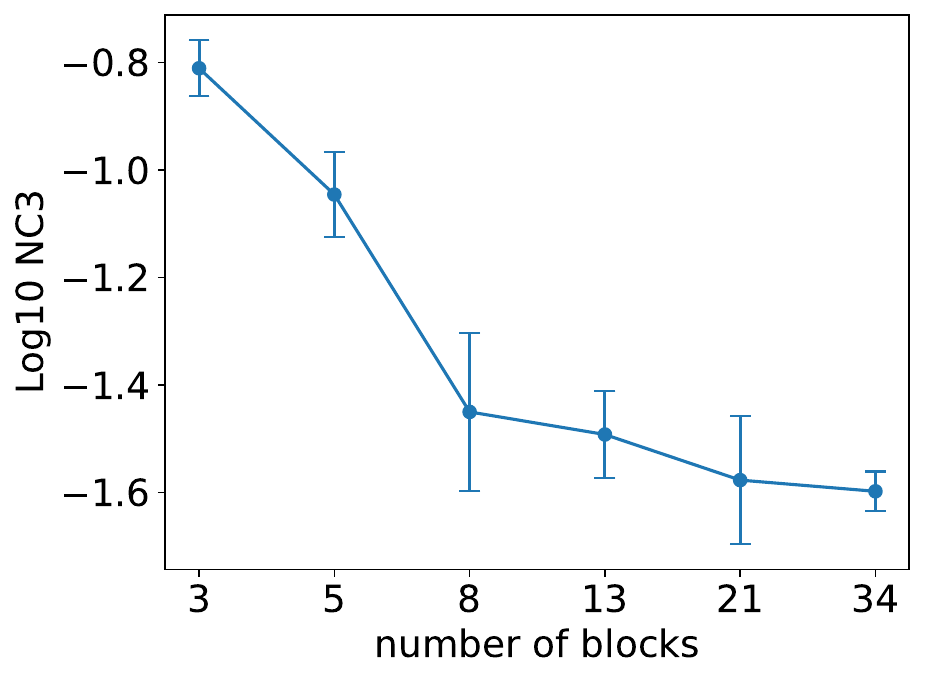}
    \includegraphics[width=0.31\linewidth]{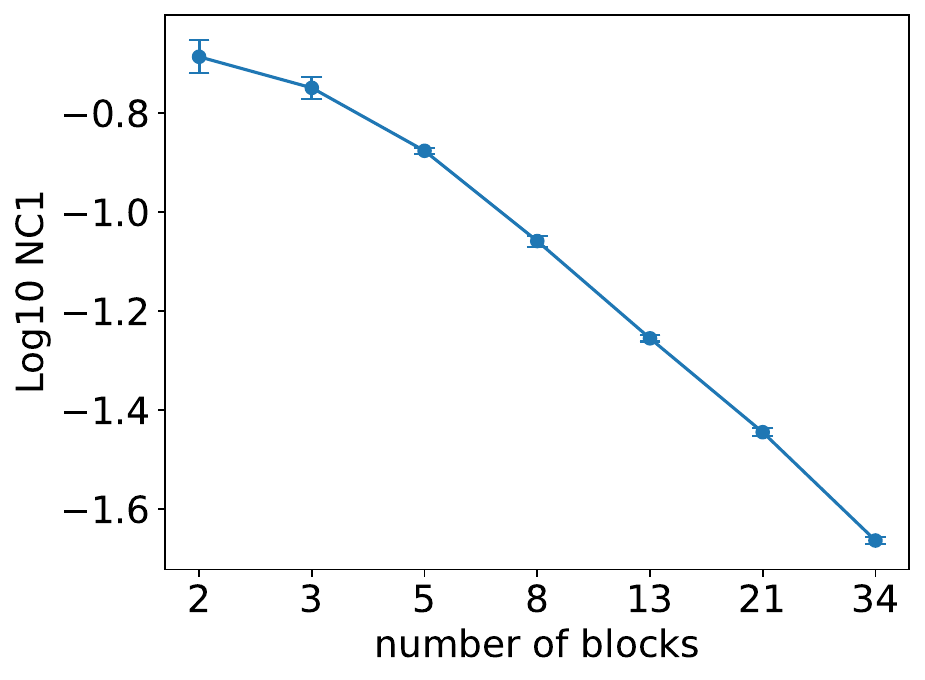}
    \includegraphics[width=0.31\linewidth]{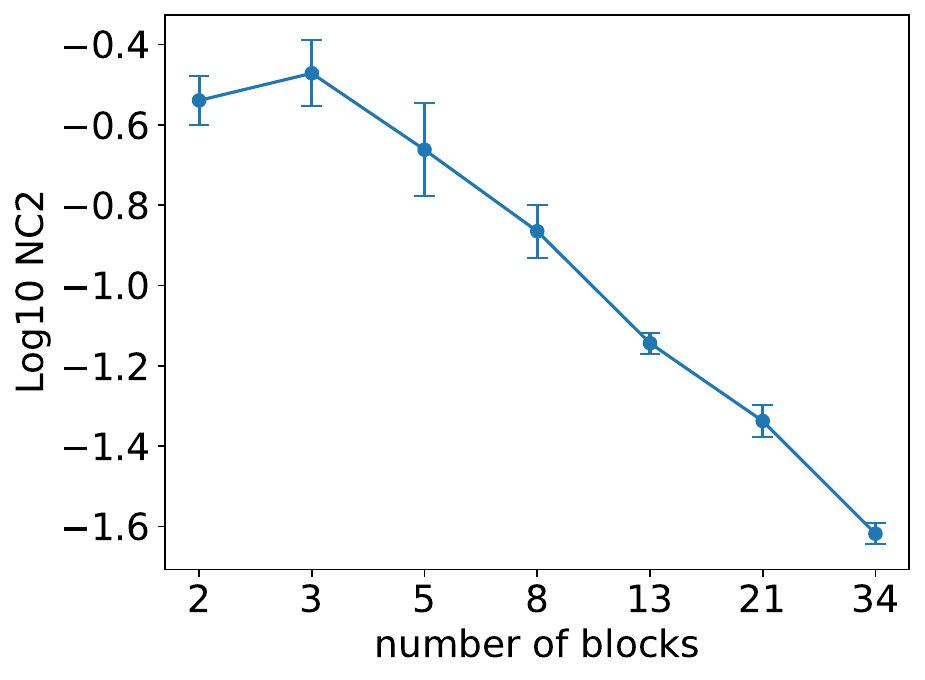}
    \includegraphics[width=0.31\linewidth]{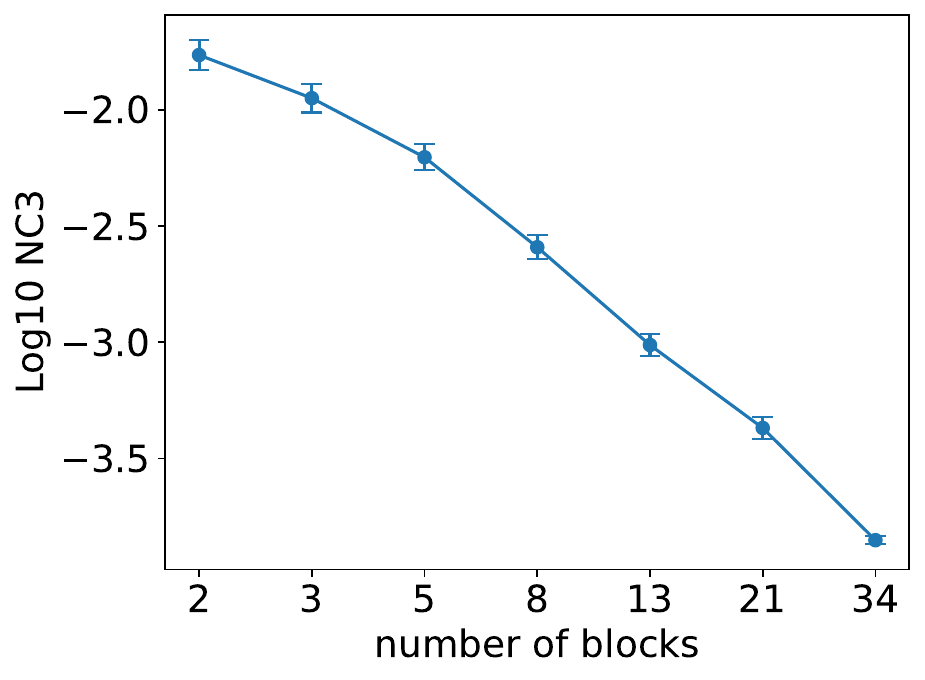}
    \caption{$\log_{10}$ of NC1, NC2 and NC3 metrics respectively in the left, middle and right column, as a function of the number of blocks $L$. \textit{First row}: $L$-RN1 on CIFAR10; \textit{second row:} $L$-T11 on CIFAR10; \textit{third row:} pre-LN $L$-T11 on IMDB; \textit{Fourth row:} $L$-RN2 on MNIST with $\lambda\propto L^{-1}$.} 
    \label{fig:big_experiment}
\end{figure}

Figure~\ref{fig:big_experiment} shows the three NC metrics at convergence, as a function of the depth of the architecture. The results are in agreement with the theory developed in Section \ref{sec:main}: across different datasets and architectures, NC metrics improve with depth, even when the solutions are obtained via gradient descent. 
Furthermore, for large enough depth, the plots roughly follow a log-linear trend, especially for ResNets. This suggests a polynomial dependence between NC metrics and depth $L$, which is also consistent with our theory, see the remark on the rate of convergence in Section \ref{subsec:single}. 
Finally, we remark that \cite{he2022law, rangamani2023feature} consider the effect of depth, but instead focus on the progression of NC metrics across layers, rather than evaluating such metrics in the last layer as a function of the overall depth. 

\section{Conclusion}

This work provides global optimality guarantees for neural collapse in two modern architectures: ResNets and transformers. Besides \cite{jacot2024wide} for simplified MLPs, this is the first end-to-end global optimality result for NC in deep networks. Our approach involves a reduction to a general form of unconstrained features model that holds for any continuous loss. This provides a formal justification for the validity of the UFM as a modeling principle and it motivates future work on it in new settings, such as language modeling \cite{NEURIPS2024_2858e880, zhao2024implicit}. Experimental results confirm our theoretical predictions on standard datasets trained via gradient descent, thus providing 
a simple recipe for practitioners thriving to achieve a strong collapse in applications \cite{liang2023inducing, liu2023inducing}: just increase the depth. 

Although the analysis covers a wide range of models, the behavior of global optima for architectures with two linear layers per block and constant regularization remains open. 
While we know that NC is not asymptotically reached for all datasets, studying the tradeoff between representation cost and fit loss (and, thus, the extent of NC in global optima) is an important open problem. Beyond that, our work suggests several interesting future directions. First, by improving the constructions used to prove Theorems \ref{thm:one_layer_general} and \ref{thm:two_layers_general}, one could obtain more refined bounds on the convergence rate in terms of the depth $L$, leading to sharp NC guarantees already for a moderate number of layers. Second, it would be very exciting to adjust our results to describe deep neural collapse and quantify the evolution of NC metrics across depth, thereby refining the results in \cite{wang2024progressive}. 

\section*{Acknowledgements}
M. M.\ and P. S.\ are funded by the European Union (ERC, INF$^2$, project number 101161364). Views and opinions expressed are however
those of the author(s) only and do not necessarily reflect those of the European Union or the
European Research Council Executive Agency. Neither the European Union nor the granting
authority can be held responsible for them.
This research was supported by the Scientific Service Units (SSU) of ISTA through resources provided by Scientific Computing (SciComp).


\bibliographystyle{plain}
\bibliography{neurips_2025}


\newpage
\appendix

\section{Deferred proofs}\label{app:proofs}

\ufmcemse*
\begin{proof}
For both losses, we will relax the problem and ignore the constraints coming from the equivalence relation $\mathcal R.$ Then, we prove that NC1 holds in all of these cases, which grants equivalence between the relaxed and original problem. 

For the CE loss, we apply Theorem 3.1 of \cite{zhu2021geometric}. In particular, from this theorem it follows that the optimal solutions of the regularized UFM-CE (not a-priori equivalent to \eqref{eq:gufm} because of the feature constraint) exhibit neural collapse. From their proof, it is also clear that not only does the ratio between the sizes of the optimal $w_k$ and $x_{ki}$ only depend on the ratio of the regularization terms, but also that the absolute size of these vectors is an increasing function of the regularization strength, with the limit as $\lambda \xrightarrow[]{} \infty$ being infinity. Therefore, let us pick $\lambda_W$ from the paper to be $\lambda$ in~\eqref{eq:gufm}, while we find $\lambda_H$ s.t.\ the optimal solutions of the problem in \cite{zhu2021geometric} have norm $\sqrt{d}$. Then, the global optima of the regularized UFM-CE are exactly those of the UFM-CE we consider in \eqref{eq:gufm}.

To see the last statement, assume by contradiction that there is a global optimum of the problem in \eqref{eq:gufm} which is not a global optimum of the regularized UFM-CE. Then, we can plug this solution into the regularized UFM-CE. Since it is not a global optimum, there exists a solution with strictly lower loss, and this optimum is guaranteed to have unit norm features. By plugging this optimum into \eqref{eq:gufm}, we must obtain a loss that is better than the optimal one, since the objectives are equivalent 
in this case. This leads to a contradiction. Similar arguments give that there cannot exist a global optimum of the regularized UFM-CE which is not a global optimum of \eqref{eq:gufm}, thus proving the desired equivalence. 

For the MSE loss, we perform a direct computation which includes a perturbation analysis. To simplify the loss landscape, we start by defining a lower bound on the UFM-MSE loss, which we will analyze first. Denote 
\begin{equation}\label{eq:ufm_lower}
\underline{\mathcal{L}}_{\text{UFM-MSE}} := \frac{1}{2N}\sum_{k, i=1}^{K, n} (w_k^Tx_{ki}-1)^2\,+\frac{\lambda}{2}\Fnorm{W}^2
\end{equation}
and $\underline{\mathcal{M}}^{\text{UFM-MSE}}_\epsilon$ the corresponding near-optimal set. 
Note that \eqref{eq:ufm_lower} is separable in the index $k$, thus we are facing $K$ identical, independent optimization problems. We will now do a series of partial conditional optimizations and comment on the cost of deviating from these conditional optima. First, conditioning on any $w_k$ (corresponding to the $k$-th row of $W$), we can almost exactly specify the optimal values of $x_{ki}$ for any $i.$ In particular, if $\norm{w_k}\le d^{-\frac{1}{2}}$, then the optimal solution is $x_{ki}=\sqrt{d} \cdot w_k / \norm{w_k}.$ If $\norm{w_k}>d^{-\frac{1}{2}}$, then the optimal solution is any vector on a hypersphere such that $w_k^Tx_{ki}=1.$
In the former case, for each $x_{ki},$ a deviation from the optimal value of the dot-product $w_k^Tx_{ki}=\sqrt{d}\norm{w_k}$ results in a quadratic increase in the loss around the optimal point (the cosine function has zero linear term in the Taylor expansion and non-zero quadratic term) or quartic if $\norm{w_k}=d^{-\frac{1}{2}}$ (because the loss at optimum would be 0 and being itself a quadratic function, the effects would multiply). In the case $\norm{w_k}>d^{-\frac{1}{2}},$ the loss increase around the optimum is again quadratic. Therefore, in all cases the maximum allowed deviation from the optimum given an extra loss of $\epsilon$ is at most $\mathcal{O}(\epsilon^{1/4})$ and, thus, goes to 0 as $\epsilon$ goes to zero. 

Now, denote $z\equiv \norm{w_k}.$ The loss of the $k$-th group only depends on $z$ and $x_{ki},$ but plugging-in the optimal value after solving for $x_{ki}$ we arrive at a single-dimensional objective that only depends on $z$:
\begin{equation*}
\frac{1}{2K}(1-z\sqrt{d})\max(1-z\sqrt{d}, 0)+\frac{\lambda}{2}z^2.
\end{equation*}
From the form of this optimization problem, it is clear that the unique global optimum is reached on $(0,d^{-\frac{1}{2}}).$ The solution is simply $\frac{1}{\sqrt{d}(1+\lambda K)}.$ First, we note that, for fixed $\lambda, K,$ this solution is strictly smaller than $d^{-\frac{1}{2}}$ with non-zero margin. Second, any deviation from this optimal solution will result in a quadratic increase in the loss function, therefore for a fixed extra loss of $\epsilon$, the maximum allowed deviation of $\norm{z_k}$ from its optimal value is $\mathcal{O}(\epsilon^{1/2})$, which also goes to 0 with $\epsilon$ going to 0. Moreover, since its optimal value (and also maximum allowed deviation for $\epsilon$ small-enough) is strictly smaller than $d^{-\frac{1}{2}}$, we know that the optimal value of the $x_{ki}$ is indeed $\sqrt{d}\cdot w_k/\norm{w_k}$ and the maximum allowed deviation is also $\mathcal{O}(\epsilon^{1/2}).$ 

The function value in \eqref{eq:ufm_lower} cannot be optimized any further, thus we know what $\underline{\mathcal{M}}^{\text{UFM-MSE}}_0$ is. In particular, the solutions in $\underline{\mathcal{M}}^{\text{UFM-MSE}}_0$ must satisfy the NC1 and NC3 properties. Now, if the global optima of \eqref{eq:gufm} with MSE loss and \eqref{eq:ufm_lower} are equal, then $\mathcal{M}^{\text{UFM-MSE}}_0 \subset \underline{\mathcal{M}}^{\text{UFM-MSE}}_0$ and thus the optimal solutions of \eqref{eq:gufm} with MSE loss must also satisfy the NC1 and NC3 criteria from the lemma statement.

To show that the global optima are equal and to argue about NC2, we turn back to the original problem \eqref{eq:gufm} with MSE loss. Since we know that the optimal solutions agree, we can focus directly on $\underline{\mathcal{M}}^{\text{UFM-MSE}}_0.$ After plugging any optimal solution of \eqref{eq:ufm_lower} into $\mathcal{L}_{\text{UFM-MSE}},$ we see that the regularization part is constant, so we are left with optimizing the fit part. Analyzing the loss incurred by $x_{ki}$ on position $l \neq k$ we see that it is $(w_l^Tx_{ki})^2=(w_l^Tw_k)^2d(1+\lambda K)^2.$ Summing this over all indices and samples (using the symmetries) we see that the total loss is proportional to the Frobenius norm of the off-diagonal elements of $WW^T.$ Therefore, a lower-bound on the loss is 0, which is achievable provided $W$ has at least as many columns as rows, as assumed in the lemma. Let us simply choose $W$ to be a scaled orthogonal matrix, and note that the loss cannot be optimized any further. Thus, we see that $\mathcal{L}^*_{\text{UFM-MSE}}=\underline{\mathcal{L}}^*_{\text{UFM-MSE}}$ and the solutions of \eqref{eq:gufm} with MSE must satisfy NC2. Any deviation of $W$ from an orthogonal matrix will result in an increase in the loss which is at least quartic: given a fixed extra loss of $\epsilon$, the solution in $\mathcal{M}^{\text{UFM-MSE}}_\epsilon$ must be $\mathcal{O}(\epsilon^{1/4})$ close to an orthogonal matrix. 

Finally, the converse statements also 
readily follow from the above computations. 
\end{proof}

\onelayergeneral*
\begin{proof}
We first discuss how to deal with the equivalence relation $\mathcal{R}.$ The argument is identical whether we take individual samples if all samples are distinct, or we treat the equivalence classes as individual samples. Thus, for simplicity of notation we assume, without loss of generality, that the samples are all distinct. 

We start with the proof for the $L$-RN1 model. Notice that $\mathcal{L}_{L, 1}(\theta)=\loss_{\text{GUFM}}(W_L, X_L)+\frac{\lambda}{2}\sum_{l=1}^{L-1}\Fnorm{W_l}^2.$ The goal is to show $\loss^*_{\text{GUFM}}=\underset{L\xrightarrow[]{}\infty}{\lim} \loss^*_{L, 1}.$ In that case, $\loss_{\text{GUFM}}(W_L, X_L)$ must converge to $\loss^*_{\text{GUFM}}.$ Therefore, $(W_L, X_L)$ induced by $\theta \in \opt^{L, 1}_0$ must also belong to $\opt_\epsilon^{\text{GUFM}}$ for $\epsilon$ arbitrarily small, which evoking Lemma~\ref{lem:GUFM_optima} guarantees the convergence as defined in~\eqref{eq:limsup_one_layer_general}.

Note that we can represent a one-block-deeper ResNet that perfectly copies the original ResNet by simply adding an identity block with zero weight matrices/biases and residual connection left untouched. Thus, $\loss^*_{L, 1}$ is non-increasing in $L$ and it suffices to prove the limit for any sequence of $L$'s going to infinity. We will prove it by explicitly constructing 
a sequence of $L$-RN1 ResNets s.t.\ their losses converge to $\loss^*_{\text{GUFM}}$ as $L\to\infty$.

Pick any $(W_L, X_L)\in \opt_0^{\text{GUFM}}$ and relabel $H:= X_L.$ Thus, $h_{ki}$ is the feature representation of the sample $ki$ in the penultimate layer. Define $\bar H$ as the matrix of unique points $h_{ki}$, and let us index them with a single index as $\bar h_j.$ Denote the number of these unique points as $\bar K.$ If we write $j(ki)$ we mean the index $j$ such that $\bar h_j = h_{ki}.$ Before starting the construction, we need to define a key data-dependent quantity. First, take $X_1={\rm LN}(W_0X_0+b_0)$ for $b_0$ and $W_0$ sampled from a continuous distribution. Since points in $X_0$ are all disjoint, this property holds also for $X_1$ with probability 1. Moreover, with probability zero, any sample in $X_1$ is identical to $h_{ki}$ for any of the vectors in $H$. For simplicity, we will refer to $X_1$ and its samples as $X$ and drop the index. Fix an ordering of the points $x_{ki}$ as the lexicographical ordering of $(k, i).$ For each $(k, i)$ find a smooth oriented curve $\mathcal{G}_{ki}$ connecting $x_{ki}$ with $h_{ki}$ on the set of feasible points ($\sqrt{d}$ norm hypersphere with zero-sum entries) such that all of the following holds: 
\begin{enumerate}
    \item The curvature of $\mathcal{G}_{ki}$ defined as the Lipschitz constant of the unit-norm oriented tangent function $\mathcal{T}_{ki}$ is bounded by $B.$
    \item For all $(l, j)>(k, i)$, $\underset{x\in \mathcal{G}_{ki}}{\max}x_{lj}^Tx \le d(1-m)$ for some $m>0$, \ie, all the subsequent points $x_{lj}$ are far enough from the curve $\mathcal{G}_{ki}.$
    \item There is precisely one point $\bar x_{ki} \in \mathcal{G}_{ki}$ such that $\bar x_{ki}^T h_{ki} = d(1-cm),$ where $c>1$ is chosen large enough. Denote $\Bar{\mathcal{G}}_{ki}$ as the set of points on $\mathcal{G}_{ki}$ between $x_{ki}$ and $\bar x_{ki}.$ Then, we assume that, for all $(l,j)<(k, i)$, $ \underset{x\in \Bar{\mathcal{G}}_{k,i}, y \in \mathcal{G}_{l,j}\backslash\Bar{\mathcal{G}}_{l,j}}{\max} x^Ty \le d(1-m).$
    \item The length of $\mathcal{G}_{ki}$ is no more than $2\pi\sqrt{d}.$
    \item $m$ is chosen small enough s.t.\ $10cm \le (d-\underset{j(ki)\neq j(lp)}{\max}\bar h_{j(ki)}^T \bar h_{j(lp)})/d.$
\end{enumerate}

It is clear that a construction satisfying these properties exists, since the constants $B, c, m$ are chosen with respect to $X, H$ and the number of points we consider is finite. We also note that this requires the inner dimension of the representations to be at least 4 since this would not be possible on a 2D circle.  

The idea of the construction is as follows. Take $L$ large enough and divide the layers into $N+\bar K+1$ blocks. The first $N$ blocks are of the same number of layers $L_1$, and the depth of the last one will be specified later. Each of the first $N$ blocks of layers will focus on a single sample, while not changing the representation of the other samples at all. The goal of the $ki$-th block is to only move the $ki$-th sample on its curve towards $h_{ki},$ until it hits $\bar x_{ki}.$ Then, the $\bar K$ next blocks of depth $L_2$ will move all the samples corresponding to the same $j(ki)$ at once, ever closer to their respective $\bar h_{j(ki)}$ vectors. Finally, the very last block which consists of the very last layer will simply be chosen as the optimal $W_L$ corresponding to $H.$ 

We will now construct explicitly all the layers. Denote by $W_{ki}^l, b_{ki}^l$ the parameters of the $l$-th layer of the $ki$-th block and define $x^l_{ki}$ to be the feature representation of the $ki$-th sample as an input to that layer. Consider a sphere with center $x_{ki}^l$ and radius $\frac{\alpha_{ki}^lm\sqrt{d}}{2\sqrt{d+m^2/4}},$ where $\alpha_{ki}^l$ is a small-enough number whose role will be clear soon. Since this sphere is small enough and $\mathcal{G}_{ki}$ has bounded curvature, there exists exactly one point $\tilde x_{ki}^{l+1}$ on the intersection between $\mathcal{G}_{ki}$ and the considered sphere which is closer to $h_{ki}$ as $x_{ki}^l.$ Denote $d_{ki}^l=\frac{\tilde x_{ki}^{l+1}-x_{ki}^l}{\norm{\tilde x_{ki}^{l+1}-x_{ki}^l}}=\frac{\tilde x_{ki}^{l+1}-x_{ki}^l}{\frac{\alpha_{ki}^lm\sqrt{d}}{2\sqrt{d+m^2/4}}}$. The weights are constructed as follows: 
\begin{align}\label{eq:lrnt1_weight_construction}
W^l_{ki}&=\alpha^l_{ki}\frac{\mathbf{1}+\frac{m}{2}d_{ki}^l}{\sqrt{d+m^2/4}}\frac{(x_{ki}^l)^T}{\sqrt{d}}, \\
b^l_{ki}&= -\left(1-\frac{m}{2}\right)\frac{\alpha_{ki}^l\sqrt{d}}{\sqrt{d+m^2/4}}\mathbf{1}, \nonumber
\end{align}
if $(x_{ki}^l)^Th_{ki}\le d(1-cm),$ otherwise $W^l_{ki}=0; b^l_{ki}=0.$ The $\alpha^l_{ki}$ is an optimizable parameter and since the form above is also $W$'s SVD, it is its singular value. Thus, $\sigma(W^l_{ki}x_{ki}^l+b^l_{ki})=\frac{\alpha^l_{ki}m\sqrt{d}}{2\sqrt{d+m^2/4}}(\mathbf{1}+d_{ki}^l),$ while $\sigma(W^l_{ki}x_{st}^l+b^l_{ki})=0$ for any $(s,t)\neq(k,i)$ thanks to our margin definition. Therefore, before $x_{ki}$ hits its final destination, we have $$x_{ki}^{l+1}=\LN\left(x_{ki}^l+\frac{\alpha^l_{ki}m\sqrt{d}}{2\sqrt{d+m^2/4}}(\mathbf{1}+d_{ki}^l)\right)=\frac{\sqrt{d}\left(x_{ki}^l+\frac{\alpha^l_{ki}m\sqrt{d}}{2\sqrt{d+m^2/4}}d_{ki}^l\right)}{\norm{x_{ki}^l+\frac{\alpha^l_{ki}m\sqrt{d}}{2\sqrt{d+m^2/4}}d_{ki}^l}}=\tilde x_{ki}^{l+1}.$$ From this, it is clear that $x_{ki}$ is moving along and on the curve, while the other samples stay stationary.

It remains to compute how fast $x_{ki}$ travels along the geodesic with this construction. To this end, denote $\beta_{ki}^l:=\sphericalangle (x_{ki}^l, \bar x_{ki})$ as the spherical angle between $x_{ki}^l$ and $\bar x_{ki}$. Let $\Delta\beta^l_{ik}:=\beta_{ki}^{l+1}-\beta_{ki}^l,$ i.e., the angle shift of $x_{ki}^l$ in the $l$-th layer of the $ki$-th block. Using simple trigonometry we can compute:
$$\Delta\beta^l_{ik}=2\arcsin\left(\frac{\alpha^l_{ki}m}{2\sqrt{d+m^2/4}}\right)\ge\frac{m}{4\sqrt{d}}\alpha^l_{ki},$$ where the inequality holds for $\alpha^l_{ki}$ small enough.

Therefore, it suffices to choose $L$ and $L_1$ large enough and set $\alpha_{ki}^l=\frac{4\sqrt{d}\beta_{ki}^l}{L_1m}$ if $(x^l_{ki})^Th_{ki}\le d(1-cm)$ and $0$ otherwise. In this way, the total regularization cost of the layers in the first $N$ blocks can be upper bounded as
$$\frac{\lambda}{2}\sum_{k,i,l}^{K, n, L_1}\Fnorm{W_{ki}^l}^2\le\frac{32d\pi^2 \lambda N}{L_1m^2}.$$ We see that this cost goes to 0 as $L_1$ goes to infinity.

After $N$ blocks, all the samples now lie within the $c$-multiple of margin ($(x_{ki}^{L_1})^Th_{ki}\ge d(1-cm)$) of their respective optimal $h_{ki}$ features. The goal of each of the $\bar K$ blocks is to move the corresponding samples in the $j$-th group all together ever closer to these final vectors. Since this time the construction will be equivalent for all the samples within one group, we will refer to these samples simply as a single $j$-th sample in the $l$-th layer of the respective block, using the notation $x_j^l.$ We define all layers in the $j$-th block as follows: 
\begin{align*}
W^l_j&=\alpha^l_j\frac{\mathbf{1}+cm\bar h_j}{\norm{\mathbf{1}+cm\bar h_j}}\frac{\bar h_j^T}{\sqrt{d}}, \\
b^l_j&= -\left(1-2cm\right)\frac{\alpha_j^l\sqrt{d}}{\norm{\mathbf{1}+cm\bar h_j}}\mathbf{1}, \nonumber
\end{align*}
where again $\alpha^l_j$ is an optimizable parameter. By similar computations as above, the above construction makes sure that $\sigma(W_j^lx_j^l+b_j^l)=\frac{\alpha_j^l}{\norm{\mathbf{1}+cm\bar h_j}}((\bar h_j^Tx_j^l-d+2cmd)\mathbf{1}+cm\bar h_j^Tx_j^l\bar h_j)$ while $\sigma(W_j^lx_i^l+b_j^l)=0$ for $i\neq j.$ After subtracting the mean in the layer norm we are adding $\frac{\alpha_j^lcm\bar h_j^Tx_j^l}{\norm{\mathbf{1}+m\bar h_j}}\bar h_j$,  which is at least a $\frac{\alpha_j^lcm}{2\sqrt{d}}$ multiple of $\bar h_j.$ Denote $\beta_j^l=\sphericalangle (x_j^l, \bar h_j)$ and $\Delta\beta_j^l=\beta_j^{l+1}-\beta_j^l.$ 

Using trigonometry again, 
we get:
$$\Delta\beta^l_j\ge \arctan\left(\frac{\alpha_j^lcm\sin(\beta_j^l)}{2\sqrt{d}(1+\alpha_j^lcm\cos(\beta_j^l)/(2\sqrt{d}))}\right)\ge\frac{\alpha_j^lcm\sin(\beta_j^l)}{4\sqrt{d}(1+\alpha_j^lcm\cos(\beta_j^l)/(2\sqrt{d}))}\ge \frac{\alpha_j^lcm\beta_j^l}{16\sqrt{d}},$$
where all inequalities hold from basic properties of trigonometric functions for small-enough angles. Thus, the angular shift is lower bounded as follows: $\Delta\beta_r^l \ge \frac{\alpha_j^lcm\beta_j^l}{16\sqrt{d}}.$ If we choose $\alpha_j = \alpha_j^l$ constant across layers, we get $\beta_j^{L_2}\le \left(1-\frac{\alpha_jcm}{16\sqrt{d}}\right)^{L_2}\beta_j^0.$ 

We will choose $\alpha_j=\frac{16\sqrt{d}\log(L_2)}{cmL_2}.$ Then, the total regularization of the layers in the penultimate blocks is upper bounded as follows:
$$\frac{\lambda}{2}\sum_{l,j=1}^{L_2,\bar K}\Fnorm{W_{j}^l}^2\le \frac{2^{7}Nd\lambda\log(L_2)^2}{m^2L_2}.$$ This goes to zero linearly up to poly-log factors as $L_2$ goes to infinity. Finally, we have that the final positions $x_{ki}^{L_2}$ of the samples converge fast to their optimal counterparts $h_{ki}$ with $L.$ To see this, plugging our choice of $\alpha_j$ into $\beta_j^{L_2}\le \left(1-\frac{\alpha_jcm}{16\sqrt{d}}\right)^{L_2}\beta_j^0$ we get $\beta_j^{L_2} \le \left(1-\frac{\log(L_2)}{L_2}\right)^{L_2}\beta_j^0\le \frac{2\beta_r^0}{L_2},$ so the samples converge linearly to their optimal positions as $L_1, L_2 \xrightarrow[]{}\infty.$ From the continuity of the fit part of the loss, we see that the total loss of this construction indeed converges to $\loss^*_{\text{GUFM}}$ of the corresponding GUFM problem. Therefore, our upper bound on the loss of globally optimal solutions converges to $\mathcal{L}_{\text{GUFM}}^*$ and evoking Lemma~\ref{lem:GUFM_optima} we know that a $(W_L, X_L)$ optimal for~\eqref{eq:nn_opt_general} is nearly optimal for~\eqref{eq:gufm} and thus exhibits the required convergence.

Next, we continue with the proof for $L$-Tx1. Notice that, if we can ensure after the end of the first block that all the \textit{different} contexts have different representations and that two representations of different contexts don't lie on a line with some of the final positions $h_{ki}$, then by setting all the weights in attention layers of the subsequent blocks to 0, the rest of the transformer becomes a ResNet with LayerNorm and we can apply an identical construction as in the $L$-RN1 part to conclude. The only caveat (except making sure that the margin is positive)  is that, since the total regularization loss of the construction for $L$-RN1 goes to 0 with $L$ going to infinity, we must make sure that the same is true for the first block. However, as we will see, this will make the margin $m$ a function of $L$ that slowly goes to 0. To compensate for this, we will need to set the layers in the subsequent blocks accordingly bigger, and we will make sure that the margin goes to 0 slowly enough so that this adjustment will not qualitatively change the results. Another issue we have to deal with is that if the norm of the $W_{QK}$ matrix has to go to 0, the attention weights must necessarily converge to uniform. Thus, our construction must withstand this burden. 

We will start with the construction of the embedding matrices. The embedding matrix $W_e \in \mathbb{R}^{d\times d_0}$ will just lift the dimension to the inner-dimension of the transformer $d_l\ge2V+2$, i.e., the $v$-th column of $W_e$ is $e_{v}$ in $d_l$-dimensional space. Then, the $(C-i)$-th column of $W_p \in \mathbb{R}^{d\times C}$ will be $a\cdot e_{2V+1}+b\cdot e_{2V+2}$, where $a, b > 0$ are the unique solutions of the following two equations: $a+b=-1$ and $a^2+b^2=2^{2(i+1)}-1$. Thus, after the embedding layer, the sum of the entries of the entire embedding is 0 and after the first normalization layer, the $j$-th token at the $(C-i)$-th position will have $\sqrt{d}2^{-(i+1)}$ on its $j$-th entry and the only other non-zero entries will be at positions $2V+1$ and $2V+2$.

Let us construct the first block. Here, all MLP layers will be set to 0 so that they have zero effect. Moreover, due to the constraints discussed above, attention matrices $W_K, W_Q$ or $W_{QK}$ will also be set to zero. Finally, the value and output matrices will be set as $W_V=W_O=\sqrt{\gamma(L)}A$ or $W_{VO}=\gamma(L)A$, where $A$ shifts all entries from the range $1, \dots, V$ to the range $V+1, \dots, 2V,$ and $\gamma(L)$ is a decreasing function converging to 0 at infinity that will be defined later. This ensures that the representations before and after attention are summed in the residual connection on different positions, which will be technically convenient later. Since the attention matrices are identically zero, the attention weights corresponding to the $c$-th token will just be uniform $1/c$ for all the tokens up to this one. Therefore, the representation of the $c$-th token after the attention layer and before the residual connection is the $\gamma(L)$-multiple of the average of all the representations of the previous tokens and itself from an input to the first block shifted by $V$ positions.

We now show that two different contexts must necessarily have different representations, which gives that the margin after block 1 is non-zero. If we compare two samples (contexts) with different context lengths, then they will necessarily have different numbers of distinguishable summands (i.e. various negative powers of $2,$ divided by the sample's context length) present in the entries between $(V+1)$-th and $2V$-th. Since there is a different number of summands, there must exist at least one entry where the number of summands disagree, and the numbers in this entry must have different numbers of ones in their binary representation, which guarantees that samples with different context lengths must have different representations. Furthermore, two samples with the same context length but different contexts will be divided by the same averaging number, but then they can be distinguished since the map from contexts to representations (without dividing by the context length) is injective due to the uniqueness of the binary representation of the summands.

Therefore, all non-identical contexts have different representations and, in addition, the previous argument also shows that every pair of representations of two different contexts is linearly independent. This remains true after the residual connection. If we choose $\gamma(L)$ small enough for all $L,$ then the original encodings of the current token will not mix up with the much smaller summands from the attention layer. The relative size of all the summands stays the same also after normalization and the MLP block has no effect, so all different contexts have different representations after the first layer. The only issue we could face is that the representations end up coinciding with one of the $h_{ki}$'s. 
To avoid this, $W_O$ or $W_{VO}$ can implement a tiny rotation. Since the number of tiny rotations is uncountably infinite, there is at least one for which there is no intersection. Let $\sqrt{d}\tilde m$ be the minimal distance between representations of any two samples after the attention mixing, before the multiplication by value and output matrices and before the residual connection. Note that $\tilde m$ is positive and independent of $X, Y, L,$ because the different contexts are all pairwise linearly independent. Then, after the multiplication by the value and output matrices, such distance will be $\gamma(L)\sqrt{d}\tilde m.$ For small enough $\gamma(L),$ the worst-case addition in the residual connection corresponds to the case in which the two samples with the same latest token also realize the margin minimum. However, if $\gamma(L)\le 0.1,$ then the difference of the samples after the residual connection and after the normalization is at least equal to the distance between the representations on positions $V+1$ to $2V,$ which is at least $\gamma(L)\sqrt{d}\tilde m.$ Thus, this is the minimum pairwise distance of the data after the first attention block. 

Next, we can apply the construction for $L$-RN1 if we set all the remaining attention layers to zeros, since then the remainder of the network will be functionally equivalent to $L$-RN1. The only remaining issue is that the margin after the first layer is a function $\gamma(L)$ of the total number of layers. To choose a good scaling of $\gamma(L),$ we need to consider the elements of the construction for $L$-RN1 that depend on the margin, which is the sum of the Frobenius norms of the layers in the first $N+\bar K$ blocks. This is upper-bounded by $\frac{32\pi^2\lambda Nd}{L_1m^2}+\frac{128Nd\lambda \log(L_2)^2}{m^2L_2}.$ Therefore, if we choose $\gamma(L)=\Theta(L_1^{1/4})=\Theta(L_2^{1/4}),$ then both the sum of Frobenius norms of the layers in first $N$ layer blocks, as well as the Frobenius norms of $W_V, W_O$ or $W_{VO}$ in the first block of the transformer will go to 0 as $L_1, L_2 \xrightarrow[]{}\infty.$ This concludes the proof.
\end{proof}

\onelayercemse*
\begin{proof}
This is a straightforward combination of Lemma~\ref{lem:ufm_ce_mse} and Theorem~\ref{thm:one_layer_general} once we use that identical contexts for transformers are only labeled by one class, which allows to directly apply the lemma. 
\end{proof}

\twolayersgeneral*
\begin{proof}
The proof follows that of Theorem~\ref{thm:one_layer_general}. The only difference is that the construction of the weight matrices changes so that $W_{ki}^{l,1}$ and $b_{ki}^{l,1}$ have $\sqrt{\alpha_{ki}^l}$ in place of $\alpha_{ki}^l$. The second layers' weight matrices $W_{ki}^{l,2}$ are defined as $\sqrt{\alpha_{ki}^l}$-multiples of the projection matrix on the span of the output of the first sub-layer on sample $x_{ki}^l,$ so that the total mapping will be identical to the single-layer construction. Using analogous computations as above, we get:
$$\frac{\lambda(L)}{2}\sum_{k,i,l}^{K, n, L_1}\Fnorm{W_{ki}^{l,1}}^2+\Fnorm{W_{ki}^{l,2}}^2\le\frac{16\sqrt{d}\pi \lambda(L) N}{m},$$ and
for the second part of the blocks we get:
$$\frac{\lambda}{2}\sum_{l,j=1}^{L_2,\bar K}\Fnorm{W_{j}^{l,1}}^2+\Fnorm{W_{j}^{l,2}}^2\le \frac{16N\sqrt{d}\log(L_2)\lambda(L)}{m}.$$
In order for the sum of these two components to go to zero, we need $\lambda(L)=o(\log(L_2)^{-1})$ and we can choose $L_1=\Theta(L_2)$.  The rest of the proof is identical to that of Theorem~\ref{thm:one_layer_general}.
\end{proof}

\twolayercemse*
\begin{proof}
This is a straightforward combination of Lemma~\ref{lem:ufm_ce_mse} and Theorem~\ref{thm:two_layers_general} once we use that identical contexts for transformers are only labeled by one class, which allows to directly apply the lemma. 
\end{proof}

\section{Alternative architectures}\label{app:alternative_architectures}

\subsection{Vision transformers}

For vision transformers, the data is tensor-like $X_0\in\R^{N \times d_0 \times C}$, where $C$ now denotes the number of patches and $d_0$ is the dimension of the patch. However, the labels remain two-dimensional $Y\in\R^{N \times K}.$ What is considered as a sample depends on how labels are produced in the transformer. The simplest option (w.r.t.\ the rest of our paper) is to generate the prediction on the last patch of the sequence, keeping the causal mask. This will, however, change the definition of ``samples'' and the NC metrics, since we only need to focus on the last patch. Therefore, samples will only be considered as the last patch, and the NC metrics will only be defined over the representations of the last patches. Similarly, the equivalent DUFM will also correspond to the last patches.

Theorem~\ref{thm:one_layer_general} and~\ref{thm:two_layers_general} and, thus, also Corollary~\ref{cor:one_layer_cemse} and~\ref{cor:two_layer_cemse} hold for vision transformers too, as long as we do the following changes to the proof of Theorem~\ref{thm:one_layer_general} (the other statements are adjustable trivially once this is established).

\paragraph{Necessary adjustments to the proof.} Together with the uniqueness of the labeling function, we will also assume that the samples are taken from a continuous distribution (which is reasonable in the vision domain). This guarantees that the feature representations of the final patches are unique also after the first transformer block, as the event that averages over patches of two different samples coincide has zero probability. The rest of the proof is similar to that of Theorem~\ref{thm:one_layer_general}, but the subsequent MLP layers only focus on the movement of the last patches' representations and the movement of the other patches is irrelevant. 

\subsection{Pre-LN ResNets and transformers}

Unlike the post-LN ResNets (Definition~\ref{def:lrn}) and transformers (Definition~\ref{def:transformer}), the pre-LN architectures apply the LayerNorm directly before the attention and/or linear layers, but only \textit{within} the residual connection, leaving the main residual stream untouched. While this potentially makes the features at initialization grow linearly with depth, it makes for more stable gradients thanks to the direct residual path, avoiding LayerNorms that can serve as error propagation channels. This significantly simplifies the training dynamics and therefore the pre-LN transformers are currently being predominantly used. For this reason, we fully define the pre-LN architectures here and then discuss in sufficient amount of detail how to adjust the proof for this setting, since the results are qualitatively the same. 

\begin{definition}\label{def:pre_lrn}
An $L$-block pre-LN ResNet with LayerNorm and one linear layer per block (later referred to as pre-$L$-RN1) is defined as 
\begin{equation}
f_\theta=\lin_{L}\circ\LN\circ(\id+\,\sigma\circ\lin_{L-1}\circ\LN)\circ(\id+\sigma\circ\lin_{L-2}\circ\LN)\circ\dots\circ(\id+\sigma\circ\lin_{1}\circ\LN)\circ\LN \circ \lin_0,     
\end{equation}
where $\lin_l(x)=W_lx+b_l$ for all $l \in \{0, \ldots, L\}$ and $\theta$ is the collection of all learnable parameters. We denote as $X_1=\LN(W_0X_0+b_0)$, $X_{l+1}=X_l+\sigma(W_l\LN(X_l)+b_l)$ ($l \in \{1, \dots, L-1\}$), $f_\theta(X_0)=X_{L+1}:=W_L\LN(X_{L})$ the intermediate representations of the training data stored in a matrix form. We assume that all intermediate representations $X_l$ ($l \in \{1, \dots, L\}$) are of dimension $d$.  Analogously, $L$-RN2 denotes a ResNet with two linear layers per block defined as 
\begin{equation}    
f_\theta=\lin_{L}\circ\LN\circ(\id+\lin_{L-1, 2}\circ\sigma\circ\lin_{L-1, 1}\circ\LN)
\circ \dots\circ(\id+\lin_{1,2}\circ\sigma\circ\lin_{1,1}\circ\LN)\circ\LN \circ \lin_0,
\end{equation}
with $X_1=\LN(W_0X_0+b_0)$, $X_{l+1}=X_l+W_{l,2}\sigma(W_{l,1}\LN(X_l)+b_{l,1})+b_{l,2}$ ($l \in \{1, \dots, L-1\}$) and $f_\theta(X_0)=X_{L+1}:=W_L\LN(X_{L}).$
\end{definition}

\begin{definition}\label{def:pre_transformer}
An $L$-block pre-LN transformer with one or two linear layers in the attention sub-block and one or two layers in the MLP sub-block (later referred to as pre-$L$-T11, pre-$L$-T12, pre-$L$-T21, pre-$L$-T22 based on the number of linear layers in attention and MLP sub-blocks, respectively) is defined as
\begin{equation}
f_\theta(Z)=\lin_{L+1}\circ \LN_{L+1}\circ\block_L\circ \dots \circ \block_1 \circ \LN_0 \circ \operatorname{Embed}(Z).
\end{equation}
Here, $\lin_{L+1}(Z)=W_{L+1}Z+b_{L+1}$ is the last layer ($b_{L+1}$ is a matrix with the same number of columns as $Z$ that are all identical); ${\rm Embed}(Z)=W_eZ+W_p$ is the embedding layer with $W_e$ being the token embedding and $W_p$ (having the same shape as $W_eZ$) the positional embedding; and the $l$-th block is given by
\begin{equation} \block_l=(\id+\mlp_l\circ\LN_{l,2})\circ(\id+\attn_l\circ\LN_{l,1}).
\end{equation}
Such block consists of the normalization layers $\LN_{l,1}, \LN_{l,2}$, the MLP 
\begin{equation}
 \mlp_l(Z)=\sigma(W_{l}Z+b_l), \,\, \mbox{or}\,\, \mlp_l(Z)=W_{l,2}\sigma(W_{l,1}Z+b_{l,1})+b_{l,2},   
\end{equation}
respectively for the architecture pre-$L$-Tx1 and pre-$L$-Tx2, %
and the single-head attention 
\begin{equation}
    \begin{split}
 &\attn_l(Z)=W_{VO}ZA_l(Z),\,\,  A_l(Z)=\softmax(M+Z^TW_{QK}Z/\sqrt{d}),\\     
\mbox{or }& \attn_l(Z)=W_OW_VZA_l(Z),\,\, A_l(Z)=\softmax(M+Z^TW_K^TW_QZ/\sqrt{d}),  \end{split}
\end{equation}
respectively for the architecture pre-$L$-T1x and pre-$L$-T2x. The matrix $M$ is the masking matrix whose entries are $-\infty$ on the lower triangle and $0$ on the upper triangle and the diagonal.
\end{definition}

We note that the first LayerNorm right after embedding layer, which might not be used in practice often, is introduced for technical convenience but does not change the results qualitatively.  Theorem~\ref{thm:one_layer_general} and~\ref{thm:two_layers_general} and, thus, also Corollary~\ref{cor:one_layer_cemse} and~\ref{cor:two_layer_cemse} hold for pre-LN architectures too, as long as we do the following changes to the proof of Theorem~\ref{thm:one_layer_general} (the other statements are adjustable trivially once this is established).

\paragraph{Necessary adaptations to the proof.} This architecture has the disadvantage that it does not immediately absorb deviations from the zero-sum sphere and therefore, technically, the single linear layer architectures can only add non-negative changes to the residual stream. However, we argue that an \textit{almost identical} construction to the one in proof of Theorem~\ref{thm:one_layer_general} works here as well. Note that the construction from this proof, see~\eqref{eq:lrnt1_weight_construction}:
\begin{align*}
W^l_{ki}&=\alpha^l_{ki}\frac{\mathbf{1}+\frac{m}{2}d_{ki}^l}{\sqrt{d+m^2/4}}\frac{(x_{ki}^l)^T}{\sqrt{d}}, \\
b^l_{ki}&= -\left(1-\frac{m}{2}\right)\frac{\alpha_{ki}^l\sqrt{d}}{\sqrt{d+m^2/4}}\mathbf{1}, \nonumber
\end{align*}
will result in a shift in $x_{ki}$ that can be written as $\frac{\alpha^l_{ki}m\sqrt{d}}{2\sqrt{d+m^2/4}}(\mathbf{1}+d_{ki}^l)$ and the $\mathbf{1}$ will not get absorbed in the residual stream, but \textit{is orthogonal} to the zero-sum component of the movement of $x_{ki}$ \textit{and} it \textit{will} get absorbed in the next LayerNorm within the next residual stream. This allows us to copy the entire first part of the post-LN proof by mimicking the trajectories of the unit ball, while adding the constant amount of $\frac{\alpha^l_{ki}m\sqrt{d}}{2\sqrt{d+m^2/4}}-$multiple of all-ones vector in each round. Therefore, after the first $N$ blocks, the projections of all the samples on the zero-sum hyperplane are identical to those in the post-LN proof. Each sample, however, has a different component in the direction of the all-one vector. This will, however, be absorbed by the last LayerNorm. Moreover, by triangle inequality, the margin of the trajectories in this extended space is at least as big as the margin of the trajectories on the zero-mean ball. The construction for the next $\tilde K$ blocks works by the same reasoning as well. Thus, after these layers, the projections of the samples on the zero-sum ball are identical to the post-LN proof and the last LayerNorm will absorb the component along the all-ones vector. 

As for the transformers, although after the first block the samples are not centered and do not all have norm $\sqrt{d}$, after applying the LayerNorm in the first subsequent MLP block, they will all be distinct (except the ones with identical contexts). Therefore, we define the same trajectories as in the ResNet construction with the centered and normalized features, but we will perform an equivalent movement on the zero-mean ball with the radius equal to the norm of the projection of the particular sample onto the zero-mean hyperplane, while ignoring the all-ones component completely. As a result, each sample moves on its own cylinder, a projection to the zero-mean hyperplane following the trajectories on the normalized zero-mean ball, while moving arbitrarily along the all-ones direction. As before, a triangle inequality guarantees that the margin defined on the zero-mean normalized ball is not violated in the wider space during this process. The only caveat is that, if the norm of the ball along which a sample is traveling is larger than that of the $\sqrt{d}$-normed ball, we need to upscale $\alpha^l_{ki}$ by that ratio. Note that the size of the vector after the first block is upper bounded independently of the number of layers, therefore such an upscaling will only multiply the cost of weight matrices by a constant. The rest of the argument follows that of the adaptation for pre-LN ResNets.

\section{Two linear layers per block with non-vanishing or uniform weight decay}\label{app:two_layer_vanishing_reg}

Here, we intuitively describe why the NC metrics in general \textit{do not} approach the perfect NC in architectures with two linear layers per residual block as the depth goes to infinity, \textit{if the regularization is non-vanishing or vanishes uniformly across all layers}. The key is the simple inequality $\Fnorm{AB}\le \Fnorm{A}\Fnorm{B}.$ We can interpret these matrices as features, weight matrices and the change on the features added to the residual. In particular, in a ResNet with a single linear layer per block we have $\Fnorm{\Delta X_l}\le \Fnorm{W_l} \Fnorm{X_l}$ ($\Delta X_l$ is the outcome of the residual branch added back to residual stream) and, importantly, this inequality can be made equality in some cases. Even if the inequality does not hold as an equality, we still have that, for fixed $W_l, X_l,$ if $\Delta X_l=\sigma(W_lX_l)\neq 0$, then due to homogeneity  $c\Delta X_l=(cW_l)X_l.$ This makes the total change $W_l$ makes to $X_l$ scale linearly with $c,$ but its cost is quadratic. Therefore, if the directional derivative of the loss w.r.t.\ $\Delta X_l$ at layer $l$ is strictly positive, then there exists $c>0$ for which $cW_l$ \textit{will} make an improvement against $W_l=0.$ However, if two linear layers are involved, we have $$\Fnorm{\Delta X_l}\le \Fnorm{W_{l,2}\sigma(W_{l,1}X_l)}\le \sqrt{N/4}\left(\Fnorm{W_{l,1}}^2+\Fnorm{W_{l, 2}}^2\right).$$ Therefore, any change to the features will scale \textit{linearly with the regularization cost} of the matrices that were responsible for this change. In this case, the opposite to the previous statement holds: if the directional derivative of the loss w.r.t. $\Delta X_l$ is \textit{small enough,} then for small $c,$ the $c$-scaling of weight matrices will necessarily \textit{worsen} the loss compared to doing nothing.  

As we have seen in the proof of Lemma~\ref{lem:ufm_ce_mse} for the MSE loss (but this also holds for the CE loss), an $\mathcal{O}(\epsilon)$-sized perturbation around the global optimum of neural collapse causes only an $\mathcal{O}(\epsilon^2)$ increase in the loss. Furthermore, the derivative is zero at NC and locally Lipschitz around that point, which implies that the size of the derivative is $\mathcal{O}(\epsilon)$. For any input dataset $X$ that is not yet collapsed, if the points $W_L, X_L$ in the set of global optima $\Tilde{\mathcal{M}}_{L,2}$ \textit{did} approach NC in the limit, we could, by contradiction, take an optimum that is $\epsilon$-close to NC (for a sufficiently small $\epsilon$) and zero-out all the last layers that were responsible for moving the samples by a total amount of $\Theta(\epsilon)$ shift (this would need care in a rigorous proof because of the possible discontinuity of the layer-to-feature mapping). 
The change in the fit part of the loss would be $\mathcal{O}(\epsilon^2),$ but thanks to the above inequality, the total regularization cost saved by this would be $\Omega(\epsilon),$ so the loss would improve and we would arrive at a contradiction. 

The above argument holds for constant regularization $\lambda$. However, even if the regularization was vanishing, but it was the same for $W_L$ and for the rest of the network, the NC would still not be approached. To see this for MSE loss, consider a perturbed perfect scenario where the input data is $X=I_K\otimes\mathbf{1}_n^T+E$ and $E$ is a perturbation matrix of size $\Theta(\epsilon).$ $X$ is already $\epsilon$-close to NC. To move $X$ $\Theta(\epsilon)$ closer to NC, we need $\Theta(\lambda_L \epsilon)$ cost in terms of the weight matrices. Let us now compute the improvement in the corresponding GUFM objective that results from doing so. The DUFM objective with MSE is $\frac{1}{2N}\Fnorm{WX_L-Y}^2+\frac{\lambda_L}{2}\norm{W}_F^2.$ If we simplify the problem to just fitting a single row of $W$ (the optimization problem is separable, so this is w.l.o.g.), we have a simple ridge regression solution for $w$. In particular $$w^*=(nI_K+\lambda_L I_K+ \mathcal{O}(\epsilon))^{-1}(I_K\otimes \mathbf{1}_n^T+\mathcal{O}(\epsilon))y.$$ Therefore, the distance from the unperturbed fit $(nI_K+\lambda_L I_K)^{-1}(I_K\otimes \mathbf{1}_n^T)y$ is itself $\mathcal{O}(\epsilon)$ and plugging this in the loss, we see that the change in the loss function is $\mathcal{O}(\lambda_L\epsilon^2)$ which, for sufficiently small $\epsilon,$ is less than the price in terms of weight regularization. 

\section{Additional experimental results}\label{app:additional_exps}



In Figure~\ref{fig:app_exps}, we provide  additional experimental results which complement Figure~\ref{fig:big_experiment} with MNIST training of both ResNets and transformers with one linear layer per MLP block. The results and the message are consistent with those of Section~\ref{sec:experiments}. Furthermore, we consider a ResNet with two linear layers per block trained on MNIST, but with constant weight decay of $0.0025.$ As we can see, the NC metrics are almost constant across multiple depths, which is consistent with our claim that NC is not approached in this regime.

\begin{figure}
    \centering
    \includegraphics[width=0.5\linewidth]{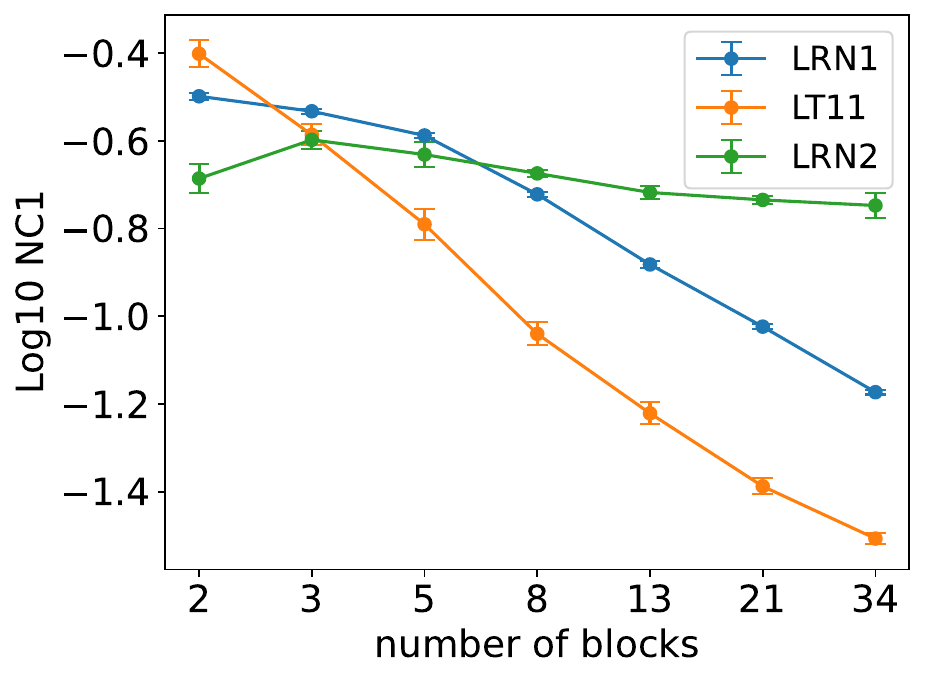}
    \includegraphics[width=0.5\linewidth]{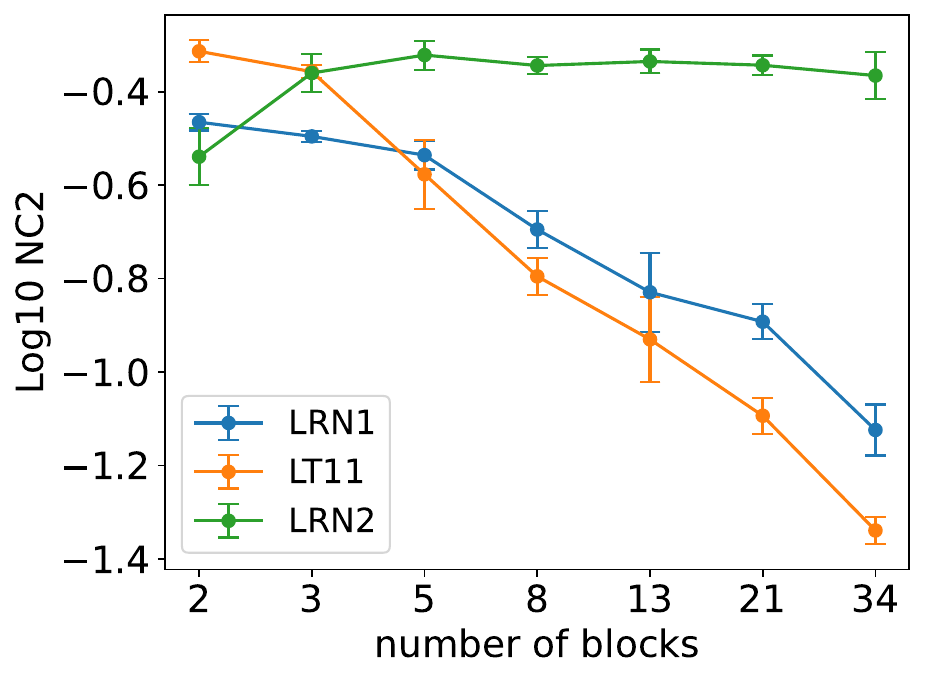}
    \includegraphics[width=0.5\linewidth]{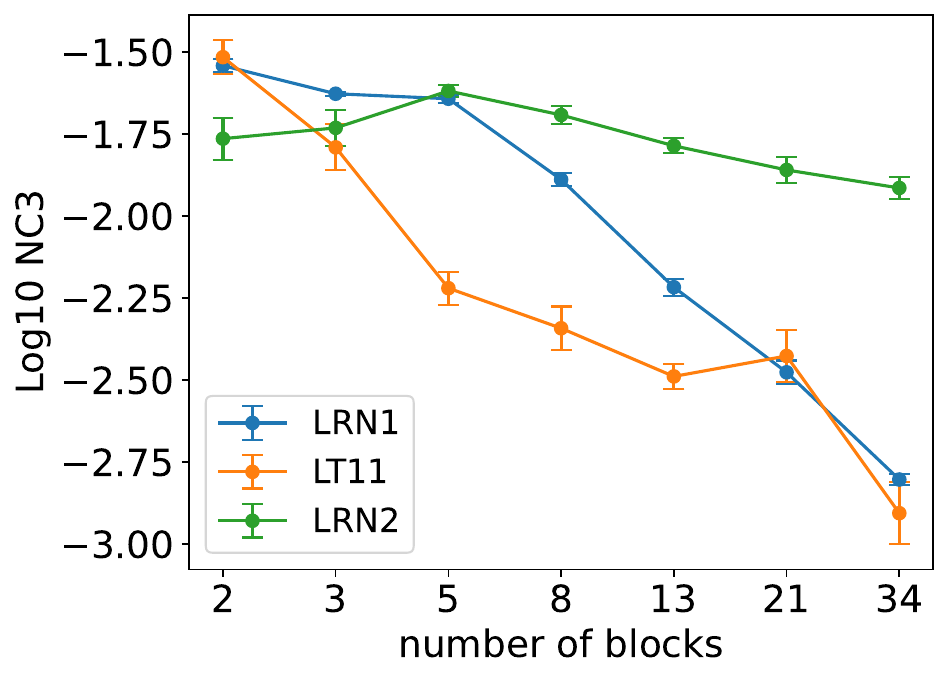}
    \caption{MNIST training. $\log_{10}$ of NC1, NC2 and NC3 metrics respectively in the upper, middle and bottom row, as a function of the number of blocks $L$. The architectures are $L$-RN1 with $\lambda=0.005$, $L$-T11 with $\lambda=0.005$, and $L$-RN2 with $\lambda=0.0025$.}
    \label{fig:app_exps}
\end{figure}

\end{document}